\documentclass[10pt]{article}


\usepackage[paperwidth=7in, paperheight=10in, textwidth=5.25in, textheight=8.2in]{geometry}

\usepackage{subfig} 
\usepackage{float}
\usepackage{amsmath}
\usepackage{amsthm}
\usepackage{amssymb}
\usepackage{natbib}
\usepackage{graphicx}
\usepackage{booktabs}
\usepackage{mathtools}
\usepackage{xr,tikz}
\usepackage{enumitem}
\usepackage{nicefrac}
\usepackage{forloop}
\usepackage{hyperref}
\hypersetup{
	colorlinks = true,
	linkcolor  = blue!80!black,
	citecolor  = blue!80!black,
	urlcolor   = blue!80!black,
}
\usepackage{url}
\usepackage{bbm} 
\usepackage{authblk}
\usepackage[capitalise]{cleveref}
\usepackage{environ}
\usepackage{bm,xspace}
\raggedbottom
\usepackage{microtype}
\usepackage{mathtools}


\newcommand{\pr}{\text{Pr}}
\newcommand{\fa}{f_{avg}}
\newcommand{\ha}{h_{avg}}
\newcommand{\dom}{\textrm{dom}}
\newcommand{\edgestates}{Q}
\newcommand{\edgestate}{q}
\newcommand{\ground}{\Omega}
\PassOptionsToPackage{noend}{algorithmic}
\usepackage{amssymb}

\let\emptyset\varnothing

\DeclareMathOperator*{\argmax}{arg\,max}
\newcommand{\InDegree}{{d_{\mathrm{in}}}}

\usepackage[capitalise]{cleveref}
\usepackage{eqparbox,array}
\usepackage{algorithm}
\usepackage[noend]{algorithmic}

\let\emptyset\varnothing

\makeatother

\newcommand{\defcal}[1]{\expandafter\newcommand\csname c#1\endcsname{{\mathcal{#1}}}}
\newcommand{\defbb}[1]{\expandafter\newcommand\csname b#1\endcsname{{\mathbb{#1}}}}
\newcounter{calBbCounter}
\forLoop{1}{26}{calBbCounter}{
	\edef\letter{\Alph{calBbCounter}}
	\expandafter\defcal\letter
	\expandafter\defbb\letter
}

\usepackage{amsthm}
\makeatletter
\newtheorem*{rep@theorem}{\rep@title}
\newcommand{\newreptheorem}[2]{%
	\newenvironment{rep#1}[1]{%
		\def\rep@title{#2 \ref{##1}}%
		\begin{rep@theorem}}%
		{\end{rep@theorem}}}
\makeatother

\newtheorem{theorem}{Theorem}
\newreptheorem{theorem}{Theorem}

\newtheorem{definition}{Definition}
\newreptheorem{definition}{Definition}

\newtheorem{lemma}{Lemma}

\newtheorem{corollary}{Corollary}

\usepackage[capitalise]{cleveref}
\crefname{corollary}{Corollary}{Corollaries}
\newtheorem{observation}{Observation}

\title{Adaptive Sequence Submodularity}

\author[1]{Marko Mitrovic}
 
\author[1]{Ehsan Kazemi\protect}
\author[2]{Moran Feldman}
\author[3]{\\Andreas Krause}
\author[1]{Amin Karbasi}
\affil[1]{Yale University}
\affil[2]{Open University of Israel}
\affil[3]{ETH Z\"urich}

\date{}
\begin{document}

\maketitle

\begin{abstract}
	In many machine learning applications, one needs to interactively select a sequence of items (e.g., recommending movies based on a user's feedback) or make sequential decisions in a certain order (e.g., guiding an agent through a series of states). Not only do sequences already pose a dauntingly large search space, but we must also take into account past observations, as well as the uncertainty of future outcomes. Without further structure, finding an optimal sequence is notoriously challenging, if not completely intractable. In this paper, we view the problem of adaptive and sequential decision making through the lens of submodularity and propose an adaptive greedy policy with strong theoretical guarantees. Additionally, to demonstrate the practical utility of our results, we run experiments on Amazon product recommendation and Wikipedia link prediction tasks.

\end{abstract}

\section{Introduction}

The machine learning community has long recognized the importance of both sequential and adaptive decision making. The study of sequences has led to novel neural architectures such as LSTMs \citep{hochreiter97}, which have been used in a variety of applications ranging from machine translation \citep{sutskever14} to image captioning \citep{vinyals15}. Similarly, the study of adaptivity has led to the establishment of some of the most popular subfields of machine learning including active learning \citep{settles12} and reinforcement learning \citep{sutton18}. 

In this paper, we consider the optimization of problems where both sequences and adaptivity are integral part of the process. More specifically, we focus on problems that can be modeled as selecting a sequence of items, where each of these items takes on some (initially unknown) state. The idea is that the value of any sequence depends not only on the items selected and the order of these items but also on the states of these items.

Consider recommender systems as a running example. To start, the order in which we recommend items can be just as important as the items themselves. For instance, if we believe that a user will enjoy the Lord of the Rings franchise, it is vital that we recommend the movies in the proper order. If we suggest that the user watches the final installment first, she may end up completely unsatisfied with an otherwise excellent recommendation. Furthermore, whether it is explicit feedback (such as rating a movie on Netflix) or implicit feedback (such as clicking/not clicking on an advertisement), most recommender systems are constantly interacting with and adapting to each user. It is this feedback that allows us to learn about the states of items we have already selected, as well as make inferences about the states of items we have not selected yet.

Unfortunately, the expressive modeling power of sequences and adaptivity comes at a cost. Not only does optimizing over sequences instead of sets exponentially increase the size of the search space, but adaptivity also necessitates a probabilistic approach that further complicates the problem. Without further assumptions, even approximate optimization is infeasible. As a result, we address this challenge from the perspective of \textit{submodularity}, an intuitive diminishing returns condition that appears in a broad scope of different areas, but still provides enough structure to make the problem tractable.

Research on submodularity, which itself has been a burgeoning field in recent years, has seen comparatively little focus on sequences and adaptivity. This is especially surprising because many problems that are commonly modeled under the framework of submodularity, such as recommender systems \citep{GabillonKWEM2013, yue11} and crowd teaching \citep{singla2014near}, stand to benefit greatly from these concepts.

While the lion's share of existing research in submodularity has focused on \textit{sets}, a few recent lines of work extend the concept of submodularity to \textit{sequences}. \citet{tschiatschek17} were the first to consider \textit{sequence submodularity} in the general graph-based setting that we will follow in this paper.  
They presented an algorithm with theoretical guarantees for directed acyclic graphs, while \citet{mitrovic18a} developed a more comprehensive algorithm that provides theoretical guarantees for general hypergraphs. 

In their experiments, both of these works showed that modeling the problem as sequence submodular (as opposed to set submodular) gave noticeable improvements. Their applications could benefit even further from the aforementioned notions of adaptivity, but the existing theory behind sequence submodularity simply cannot model the problems in this way. While adaptive \textit{set} submodularity has been studied extensively \citep{golovin2011adaptive, chen13adaptive, gotovos2015non, fujii2019beyond}, these approaches still fail to capture order dependencies. 

\citet{alaei10} and \citet{zhang16} also consider sequence submodularity (called string-submodularity in some works), but they use a different definition, which is based on subsequences instead of graphs. 
On the other hand, \citet{li17} have considered the interaction of graphs and submodularity, but not in the context of sequences. 

\paragraph{Other Related Work} Amongst many other applications, submodularity has also been used for variable selection~\citep{krause05near}, data summarization~\citep{mirzasoleiman13distributed,lin2011class,kirchhoff2014submodularity}, 
sensor placement \citep{krause07jmlr}, neural network interpretability~\citep{elenberg17}, network inference~\citep{gomez10}, and influence maximization in social networks~\citep{kempe03}. 
Submodularity has also been studied extensively in a wide variety of settings, including distributed and scalable optimization~\citep{kumar2013fast,mirzasoleiman13distributed,barbosa2015power,mirrokni2015randomized,fahrbach2018submodular, balkanski2018exponential, balkanski2018adaptive, ene2018submodular}, streaming algorithms~\citep{krause2010budgeted, badanidiyuru2014streaming, chakrabarti2014submodular, buchbinder2015online, mitrovic18b, feldman2018do, norouzifard2018beyond, kazemi2019submodular}, robust optimization~\citep{krause2008robust, bogunovic2017robust, tzoumas2017resilient, kazemi2018scalable, staib2017robust},
 weak submodularity~\citep{das2011submodular,elenberg2016restricted, elenberg2017streaming, khanna2017scalable}, and continuous submodularity~\citep{wolsey82,bach2015, hassani2017gradient, staib2018distributionally, bai2018submodular}.

\noindent \paragraph{Our Contributions} The main contributions of our paper are presented in the following sections:
\begin{itemize}
\item In \cref{sec:ass}, we introduce our framework of \textit{adaptive sequence submodularity}, which brings tractability to problems that include both sequences and adaptivity. 
\item In \cref{sec:theory}, we present our algorithm for adaptive sequence submodular maximization. We present theoretical guarantees for our approach and we elaborate on the necessity of our novel proof techniques. We also show that these techniques simultaneously improve the state-of-the-art bounds for the problem of sequence submodularity by a factor of $\frac{e}{e-1}$. Furthermore, we argue that any approximation guarantee must depend on the structure of the underlying graph unless the exponential time hypothesis is false.
\item In \cref{sec:experiments}, we use datasets from Amazon and Wikipedia to compare our algorithm against existing sequence submodular baselines, as well as state-of-the-art deep learning-based approaches.
\end{itemize}
\section{Adaptive Sequence Submodularity} \label{sec:ass}

As discussed above, sequences and adaptivity are an integral part of many real-world problems. This means that many real-world problems can be modeled as selecting a sequence $\sigma$ of items from a ground set $V$, where each of these items takes on some (initially unknown) state $o \in O$. A particular mapping of items to states is known as a \textbf{realization} $\phi$, and we assume there is some unknown distribution $p(\phi)$ that governs these states.

For example in movie recommendation, the set of all movies is our ground set $V$ and our goal is to select a sequence of movies that a particular user will enjoy. If we recommend a movie $v_i \in V$ and the user likes it, we place $v_i$ in state 1 (i.e. $o_i = 1$). If not, we put it into state 0. Naturally, the value of a movie should be higher if the user liked it, and lower if she did not.

Formally, we want to select a sequence $\sigma$ that maximizes $f(\sigma,\phi)$, where $f(\sigma,\phi)$ is the value of sequence $\sigma$ under realization $\phi$. However, $\phi$ is initially unknown to us and the state of each item in the sequence is revealed to us only after we select it. In fact, even if we knew $\phi$ perfectly, the set of all sequences poses an intractably large search space. From an optimization perspective, this problem is hopeless without further structural assumptions.  

Our first step towards taming this problem is to follow the work of \citet{tschiatschek17} and assume that the value of a sequence can be defined using a graph. Concretely, we have a directed graph $G = (V,E)$, where each item in our ground set is represented as a vertex $v \in V$, and the edges encode the additional value intrinsic to picking certain items in certain orders. Mathematically, selecting a sequence of items $\sigma$ will induce a set of edges $E(\sigma)$:
\begin{equation*}
E(\sigma) = \big\{ (\sigma_i, \sigma_j) \mid (\sigma_i, \sigma_j) \in E, i \leq j \big\}.
\end{equation*}
For example, consider the graph in Figure \ref{seq1} 
and consider the sequence $\sigma_A = [F,T]$ where the user watched The Fellowship of the Ring, and then The Two Towers, as well as the sequence $\sigma_B = [T,F]$ where the user watched the same two movies but in the opposite order. 
\begin{align*}
E(\sigma_A) &= E\big( [F,T] \big) = \big\{ (F,F),(T,T),(F,T) \big\}\\
E(\sigma_B) &= E\big( [T,F] \big) = \big\{ (T,T),(F,F) \big\}
\end{align*}

Using the self-loops, this graph encodes the fact that there is certainly some intrinsic value to watching these movies regardless of the order. On the other hand, the edge $(F,T)$ encodes the fact that watching The Fellowship of the Ring before The Two Towers will bring additional value to the viewer, and this edge is only induced if the movies appear in the correct order in the sequence.

\begin{figure*}[t]
\centering
\subfloat[]{\includegraphics[height=0.89in]{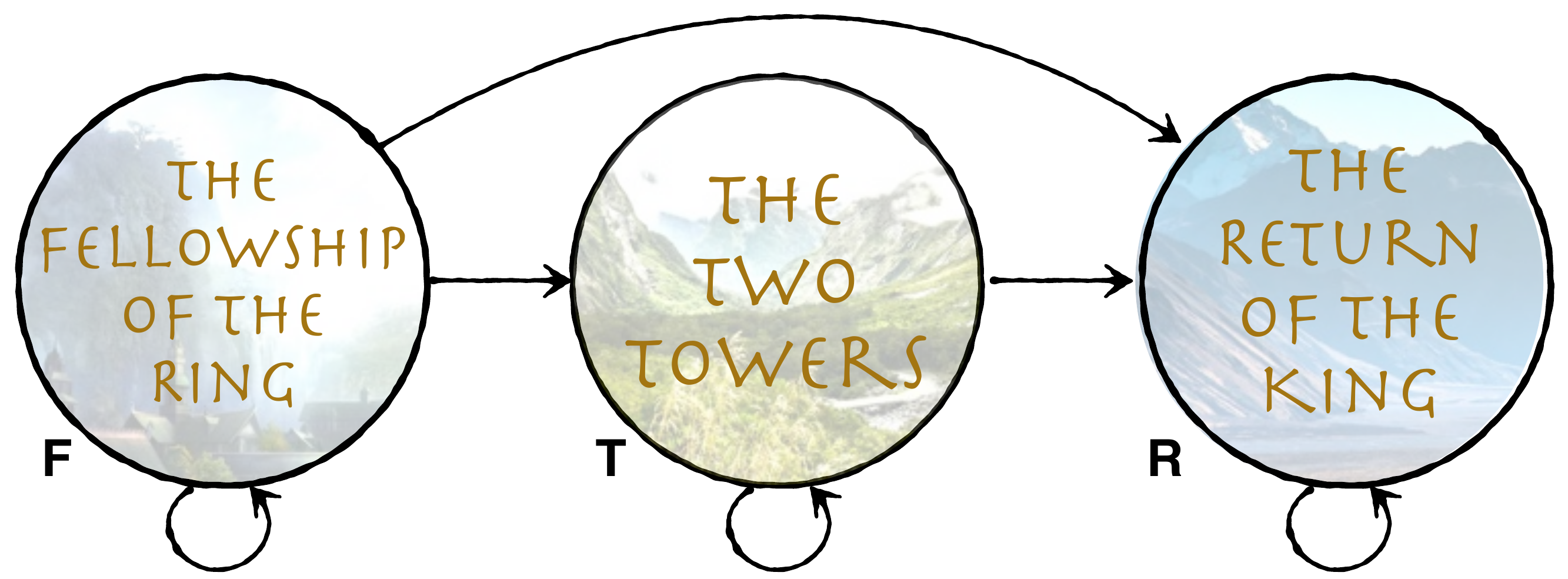}\label{seq1}}  \hspace{0.3in}
\subfloat[]{\includegraphics[height=0.89in]{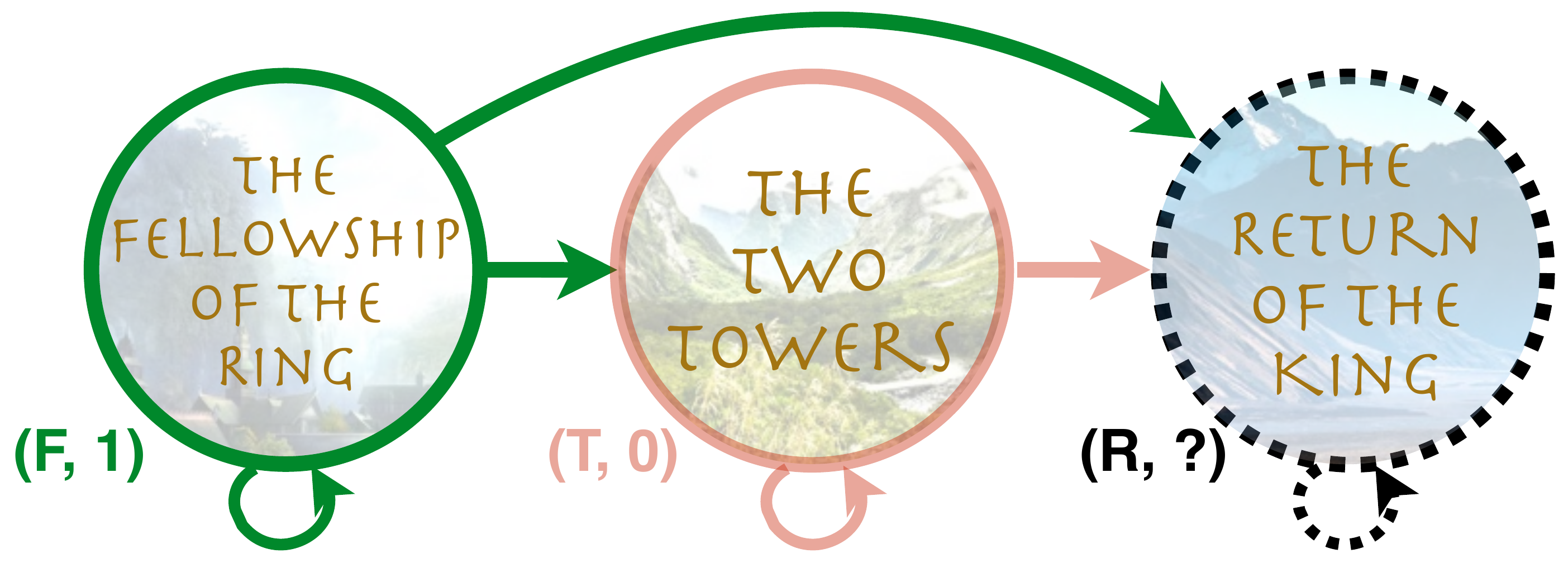}\label{seqA1}} 
\caption{(a) shows an underlying graph for a movie recommendation problem. The vertices are movies and edges denote the additional value of watching certain movies in certain orders. (b) extends this to the adaptive case, where both the vertices and the edges take on a state. The user has reported that she liked the Fellowship of the Ring (so it is placed in state 1), but she did not like The Two Towers (so it is placed in state 0). The state of the last movie is still unknown. In this example, the state of an edge is equal to the state of its starting vertex.}
\label{fig:examples}
\end{figure*}

With this graph based set-up, however, we run into issues when it comes to adaptivity. In particular, the states of items naturally translate to states for the vertices, but it is not clear how to extend adaptivity to the \textit{edges}. We tackle this challenge by assigning a state $\edgestate \in \edgestates$ to each edge strictly as a function of the states of its endpoints. That is, similarly to how a sequence $\sigma$ induces a set of edges $E(\sigma)$, a realization $\phi$ for the states of the vertices induces a realization $\phi^E$ for the states of the edges. As we will discuss later, the analysis for this approach will necessitate some novel proof techniques, but the resulting framework is very flexible and it allows us to fully redefine the adaptive sequence problem in terms of the underlying graph:
\begin{equation*}
f(\sigma,\phi) = h \big( E(\sigma), \phi^E \big) \ \text{where $\sigma$ induces $E(\sigma)$ and $\phi$ induces $\phi^E$.}
\end{equation*}

The last necessary ingredient to bring tractability to this problem is submodularity. In particular, we will assume that $h \big( E(\sigma), \phi^E \big)$ is \textit{weakly adaptive set submodular}. This is a relaxed version of standard adaptive set submodularity that can model an even larger variety of problems, and it is a natural fit for the applications we consider in this paper.

In order to formally define weakly-adaptive submodularity, we need a bit more terminology. To start, we define a \textbf{partial realization} $\psi$ to be a mapping for only some subset of items (i.e., the states of the remaining items are unknown). For notational convenience, we define the domain of $\psi$, denoted $dom(\psi)$, to be the list of items $v$ for which the state of $v$ is known. We say that $\psi$ is a \textbf{subrealization} of $\psi'$, denoted $\psi \subseteq \psi'$, if $dom(\psi) \subseteq dom(\psi')$ and they are equal everywhere in the domain of $\psi$. Intuitively, if $\psi \subseteq \psi'$, then $\psi'$ has all the same information as $\psi$, and potentially more. 

Given a partial realization $\psi$, we define the marginal gain of a set $A$ as
\begin{equation*}
\Delta(A \mid \psi) = \mathbb{E} \Big[ h \big(dom(\psi) \cup A, \phi \big) - h \big(dom(\psi),\phi \big) \mid \psi \Big],
\end{equation*}
where the expectation is taken over all the full realizations $\phi$ such that $\psi \subseteq \phi$. In other words, we condition on the states given by the partial realization $\psi$, and then we take the expectation across all possibilities for the remaining states.

\newcommand{\defWeakSetAdaptive}{%
	A function $h : 2^{E} \times \edgestates^{E} \to \mathbb{R}_{\ge 0}$ is \textbf{weakly adaptive set submodular}  with parameter $\gamma$ 
	if for all sets $A \subseteq E$ and for all 
	$\psi \subseteq \psi'$ we have:
	\begin{align*}
	\Delta(A \mid \psi') \leq \frac{1}{\gamma} \cdot \sum_{e \in A} \Delta(e \mid \psi).
	\end{align*}
}
\begin{definition} \label{def:WeakSetAdaptive}
\defWeakSetAdaptive
\end{definition}

This notion is a natural generalization of weak submodular functions \citep{das2011submodular} to adaptivity. The primary difference is that we condition on subrealizations instead of just sets because we need to account for the states of items. Note that in the context of this paper $h$ is a function on the edges, so we will condition on subrealizations of the edges $\psi^E$. However, these concepts apply more generally to functions on any set and state spaces, so we use $\psi$ in the formal definitions.

\newcommand{\defMonotoneAdaptive}
{
A function $h : 2^{E} \times \edgestates^{E} \to \mathbb{R}_{\ge 0}$ is \textbf{adaptive monotone}  
if $\Delta(e \mid \psi) \geq 0$ for all partial realizations $\psi$. That is, the conditional expected marginal benefit of any element is non-negative.
}
\begin{definition} \label{def:MonotoneAdaptive}
\defMonotoneAdaptive
\end{definition}

Figure \ref{seqA1} is designed to help clarify these concepts. It includes the same graph as Figure \ref{seq1}, but now we can receive feedback from the user. If we recommend a movie and the user likes it, we put the corresponding vertex in state 1 (green in the image). Otherwise, we put the vertex in state 0 (red in the image). Vertices whose states are still unknown are denoted by a dotted black line.

Next, in our example, we need to define a state for each edge in terms of the states of its endpoints. 
In this case, we will define the state of each edge to be equal to the state of its start point. 
In Figure \ref{seqA1}, the user liked The Fellowship of the Ring, which puts edges $(F,F)$, $(F,T)$, and $(F,R)$ in state 1 (green). She did not like The Two Towers, so edges $(T,T)$ and $(T,R)$ are in state 0 (red), and we do not know the state for The Return of the King, so the state of $(R,R)$ is also unknown. We call this partial realization $\psi_1$ for the vertices, and the induced partial realization for the edges $\psi_1^E$.

Suppose our function $h$ counts all induced edges that are in state 1. Furthermore, let us simply assume that any unknown vertex is equally likely to be in state 0 or state 1. This means that the self-loop $(R,R)$ is also equally likely to be in either state 0 or state 1. Therefore, $\Delta \big( (R,R) \mid \psi_1^E \big) =  \frac{1}{2} \times 0 + \frac{1}{2} \times 1 = \frac{1}{2}$. 

On the other hand, consider the edge $(F,R)$. Under $\psi_1$, we know $F$ is in state 1, which means $(F,R)$ is also in state 1, and thus, $\Delta \big( (F,R) \mid \psi_1^E \big) =  1$. However, if we consider a subrealization $\psi_2 \subseteq \psi_1$ where we do not know the state of $F$, then it is equally likely to be in either state and $\Delta \big( (F,R) \mid \psi^E_2 \big) =  \frac{1}{2} \times 0 + \frac{1}{2} \times 1 = \frac{1}{2}$. Therefore, for this simple function we know that $\gamma \leq 0.5$.

\section{Adaptive Sequence-Greedy Policy and Theoretical Results} \label{sec:theory}
In this section, we introduce our Adaptive Sequence-Greedy policy and present its theoretical guarantees.
We first formally define \textbf{weakly adaptive sequence submodularity}.

\newcommand{\defWeakSeqAdaptive}
{
	A function $f(\sigma, \phi)$ defined over a graph $G(V,E)$ is \textbf{weakly adaptive sequence submodular} if $f(\sigma,\phi) = h \big( E(\sigma), \phi^E \big)$ where a sequence $\sigma$ of vertices in $V$ induces a set of edges $E(\sigma)$, realization $\phi$ induces $\phi^E$, and  the function $h$ is weakly adaptive set submodular. Note that if $h$ is adaptive monotone, then $f$ is also adaptive monotone.
}
\begin{definition} \label{def:WeakSeqAdaptive}
\defWeakSeqAdaptive
\end{definition}

Formally, a policy $\pi$ is an algorithm that builds a sequence of $k$ vertices by seeing which states have been observed at each step, then deciding which vertex should be chosen and observed next. If $\sigma_{\pi,\phi}$ is the sequence returned by policy $\pi$ under realization $\phi$, then we write the expected value of $\pi$ as: 
\begin{equation*}
f_{\text{avg}}(\pi) = \mathbb{E} \big[ f ( \sigma_{\pi,\phi},\phi ) \big] = \mathbb{E} \Big[ h\big( E(\sigma_{\pi,\phi}),\phi^E \big) \Big]
\end{equation*}
where again the expectation is taken over all possible realizations $\phi$.

Our Adaptive Sequence Greedy policy $\pi$ (Algorithm \ref{alg:sequence-greedy}) starts with an empty sequence $\sigma$. Throughout the policy, we define $\psi_\sigma$ to be the partial realization for the vertices in $\sigma$. In turn this gives us the partial realization $\psi_\sigma^E$ for the induced edges.

At each step, we define the valid set of edges $\cE$ 
to be the edges whose endpoint is not already in $\sigma$. The main idea of our policy is that, at each step, we select the valid edge $e \in \cE$ with the highest expected value $\Delta(e \mid \psi_\sigma^E)$.
For each such edge, the endpoints that are not already in the sequence $\sigma$ are concatenated ($\oplus$ means concatenate) to the end of $\sigma$, and their states are observed (updating $\psi_\sigma$).

\begin{algorithm}[htb!]
	\caption{Adaptive Sequence Greedy Policy $\pi$}\label{alg:sequence-greedy}
	\begin{algorithmic}[1]
		\STATE  {\bfseries Input:} Directed graph $G = (V, E)$, weakly adaptive sequence submodular $f(\sigma,\phi) = h \big( E(\sigma), \phi^E \big)$, and cardinality constraint $k$\;
		\STATE  Let $\sigma \gets ()$\;
		\WHILE{$|\sigma| \leq k - 2 $}
		{
			\STATE$\cE = \{e_{ij} \in E \mid v_j \notin \sigma \}$ \;
			\IF{$\cE \neq \emptyset$}
			{
				\STATE		$e_{ij} = \argmax_{e \in \cE} \Delta(e \mid \psi_\sigma^{E})$ \;
				\IF{$v_i = v_j$ \textbf{or} $v_i \in \sigma$}
				{
					\STATE $\sigma = \sigma \oplus  v_j$ and observe state of $v_j$ \;
				}
				\ELSE{
					\STATE $\sigma = \sigma \oplus v_i \oplus  v_j$ and observe states of $v_i,v_j$ \;
				}
				\ENDIF 
			}
			\ELSE{
				\STATE \textbf{break}\;
			}
			\ENDIF
		}
		\ENDWHILE
		\STATE {\bfseries Return} $\sigma$\;
	\end{algorithmic}
\end{algorithm}

\newcommand{\TheoryAdaptiveSeq}{%
	For adaptive monotone and weakly adaptive sequence submodular function $f$, the Adaptive Sequence Greedy policy $\pi$ represented by \cref{alg:sequence-greedy} achieves
\[ \fa(\pi) \geq \dfrac{\gamma}{2 d_{\textup{in}} + \gamma} \cdot \fa(\pi^*),\]
where $\gamma$ is the weakly adaptive submodularity parameter, $\pi^*$ is the policy with the highest expected value and $d_{\textup{in}}$ is the largest in-degree of the input graph $G$.}

\begin{theorem} \label{theory:adaptive-seq}
\TheoryAdaptiveSeq
\end{theorem}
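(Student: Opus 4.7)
The plan is a direct charging argument that adapts the analysis of greedy for (non-adaptive) sequence submodular maximization on general graphs to the adaptive, weakly submodular setting. The key novelty, which yields the $e/(e-1)$ improvement claimed over prior bounds, is that we forgo the $(1-1/e)$-style recursion and instead charge optimum edges directly against greedy's chosen edges in a single pass.

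Index the greedy iterations as $t = 1, \ldots, T$; at iteration $t$, Algorithm \ref{alg:sequence-greedy} selects an edge $e_t \in \cE$, appends one or two vertices to its sequence $\sigma$, and updates its induced partial edge realization $\psi_t^E$. Let $\sigma^* = \sigma_{\pi^*,\phi}$ and $E^* = E(\sigma^*)$. The argument has three moving parts. First, by adaptive monotonicity, the value increment at iteration $t$ is at least the marginal of $e_t$ alone (extra induced edges only help, by monotonicity), giving the lower bound $\fa(\pi) \geq \sum_t \mathbb{E}[\Delta(e_t \mid \psi_{t-1}^E)]$. Second, for each $e^* = (v_i, v_j) \in E^*$ let $\tau(e^*)$ be the iteration when $v_j$ enters $\sigma$; then $e^* \in \cE$ at iteration $\tau(e^*)$, so greedy's maximality gives $\Delta(e^* \mid \psi_{\tau(e^*)-1}^E) \leq \Delta(e_{\tau(e^*)} \mid \psi_{\tau(e^*)-1}^E)$, and since each iteration appends at most two new vertices, each the head of at most $d_{\textup{in}}$ incoming edges, at most $2 d_{\textup{in}}$ distinct optimum edges are charged to any single iteration. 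Third, by adaptive monotonicity $f(\sigma^*,\phi) - f(\sigma, \phi) \leq \Delta(E^* \mid \psi_T^E)$, and combining Definition \ref{def:WeakSetAdaptive} with the previous two steps should yield
\begin{equation*}
\fa(\pi^*) - \fa(\pi) \leq \frac{1}{\gamma}\sum_{t=1}^T \mathbb{E}\!\left[\sum_{e^* \in E^* :\, \tau(e^*) = t}\!\Delta(e^* \mid \psi_{t-1}^E)\right] \leq \frac{2 d_{\textup{in}}}{\gamma} \sum_t \mathbb{E}[\Delta(e_t \mid \psi_{t-1}^E)] \leq \frac{2 d_{\textup{in}}}{\gamma}\, \fa(\pi),
\end{equation*}
which rearranges to the claimed bound $\fa(\pi) \geq \frac{\gamma}{2 d_{\textup{in}} + \gamma}\, \fa(\pi^*)$.

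The main obstacle is the first inequality in the display. Definition \ref{def:WeakSetAdaptive} relates $\Delta(A \mid \psi')$ to $\frac{1}{\gamma}\sum_{e \in A}\Delta(e \mid \psi)$ for a single common subrealization $\psi \subseteq \psi'$, whereas our charging wants to condition each $e^*$ on its own subrealization $\psi_{\tau(e^*)-1}^E$. Bridging this gap without incurring an extra $1/\gamma$ factor (which would give $\gamma^2$ in the final bound rather than $\gamma$) is the novel technical step: I would handle it by partitioning $E^*$ by the charging level $\tau$, applying Definition \ref{def:WeakSetAdaptive} iteratively along the chain $\psi_0^E \subseteq \psi_1^E \subseteq \cdots \subseteq \psi_T^E$, and then integrating out the randomness of $\phi$ conditional on the greedy trajectory. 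A secondary subtlety is that $E^*$ and $\tau$ are themselves $\phi$-dependent random objects, so Definition \ref{def:WeakSetAdaptive} (stated for a fixed set $A$) must first be applied path-by-path before expectations are collected; this is the kind of care the authors presumably invoke when referring to their ``novel proof techniques.''
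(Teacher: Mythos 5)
Your overall architecture matches the paper's: lower-bound each greedy increment by $\mathbb{E}[\Delta(e_t\mid\psi_{t-1}^E)]$, charge optimum edges to greedy iterations according to when their head vertex enters $\sigma$, and use the fact that each iteration adds at most two vertices, each receiving at most $d_{\textup{in}}$ arcs. But there are two genuine gaps. The first, which you do not mention, breaks the first inequality of your display: an edge $e^*=(v_i,v_j)\in E^*$ is charged only if $v_j$ eventually enters greedy's sequence, and in general most (or all) of $E^*$ may consist of edges whose heads greedy never selects---in which case your bound would assert $\fa(\pi^*)-\fa(\pi)\le 0$, which is false. These residual edges remain valid at termination and must all be charged against the \emph{last} greedy marginal $\Delta(e_\ell\mid\psi_{\ell-1}^E)$; since there can be up to $(k-1)d_{\textup{in}}$ of them, a naive per-iteration charge loses a factor of order $k$. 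The paper absorbs them via a summation-by-parts step in Inequality~\eqref{eq:basic_bound}: it bounds the \emph{cumulative} count $|E(\pi^*)\cap(\cE_0\setminus\cE_s)|\le 2s\,d_{\textup{in}}$ and uses the monotone decay of the greedy marginals across iterations (\cref{obs:terms_monotonicity}) to redistribute the terminal charge of $2\ell d_{\textup{in}}$ edges into $2d_{\textup{in}}$ per iteration. Without this, or an equivalent device, the argument does not close.

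The second gap is the one you flag---conditioning each $e^*$ on its own subrealization $\psi_{\tau(e^*)-1}^E$ when \cref{def:WeakSetAdaptive} only permits a single common $\psi$---and your proposed fix (``apply the definition iteratively along the chain'') is the right instinct but not a proof. The paper makes it rigorous by telescoping the hybrid quantity $\mathbb{E}_\Phi\bigl[h\bigl((E(\pi^*)\cap\cE_s)\cup E(\pi_s)\bigr)\bigr]$ from $s=0$ to $s=\ell$ (\cref{lem:loss,lem:opt-greed-diff}); at each step only the newly invalidated edges $E(\pi^*)\cap(\cE_{s-1}\setminus\cE_s)$ are cashed out at the current subrealization, which requires the set-difference inequality $\Delta(B\mid\psi)-\Delta(A\mid\psi)\le\frac{1}{\gamma}\sum_{e\in B\setminus A}\Delta(e\mid\psi)$ for nested $A\subseteq B$ (\cref{lem:setdiff-marginal-gain}, proved by conditioning on the realization of $A$), together with \cref{cor:setdiffsize-marginal-gain} to handle the fact that $E(\pi^*)$ and the charging levels are themselves $\phi$-dependent random objects. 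You would need to supply both ingredients, and then combine them with the summation-by-parts step above, before your displayed chain of inequalities becomes valid.
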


As discussed by \citet{mitrovic18a}, using a hypergraph $H$ instead of a normal graph $G$ allows us to encode more intricate relationships between the items. For example, in Figure \ref{seq1}, the edges only encode pairwise relationships. However, there may be relationships between larger groups of items that we want to encode explicitly. For instance, if included, the value of a hyperedge $(F,T,R)$ in Figure \ref{seq1} would explicitly encode the value of watching The Fellowship of the Ring, followed by watching The Two Towers, and then concluding with The Return of the King.

We can also extend our policy to general hypergraphs (see \cref{alg:hyper-sequence-greedy} in \cref{sec:hypergraph}). \cref{theory:hyper-adaptive-seq} guarantees the performance of our proposed policy for hypergraphs.

\newcommand{\TheoryAdaptivehyper}{%
	For adaptive monotone and weakly adaptive sequence submodular function $f$, the policy $\pi'$ represented by \cref{alg:hyper-sequence-greedy} achieves
	\[ \fa(\pi') \geq \dfrac{\gamma}{r d_{\textup{in}} + \gamma} \cdot \fa(\pi^*),\]
	where $\gamma$ is the weakly adaptive submodularity parameter, $\pi^*$ is the policy with the highest expected value and $r$ is the size of the largest hyperedge in the input hypergraph.}

\begin{theorem}\label{theory:hyper-adaptive-seq}
\TheoryAdaptivehyper
\end{theorem}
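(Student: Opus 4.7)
The proof follows the same template as \cref{theory:adaptive-seq}, with the single structural modification that each iteration of the hypergraph greedy may append up to $r$ vertices to $\sigma$ instead of at most $2$. The constant $2$ in the denominator of \cref{theory:adaptive-seq} comes precisely from this per-iteration cardinality, so replacing it with $r$ yields the stated bound, while the $\gamma$ factor enters in exactly the same way via \cref{def:WeakSetAdaptive}.

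Formally, I would use the following notation: let $\sigma_t$ be the greedy sequence after iteration $t$, $\psi_t$ the observed partial realization of vertex states, $\psi_t^E$ the induced partial realization of edge states, and $\Delta_t$ the expected marginal gain of the greedy at step $t$ conditioned on $\psi_{t-1}$. By adaptive monotonicity and the standard policy-concatenation argument for adaptive submodularity,
\[ \fa(\pi^*) \le \mathbb{E}\bigl[h\bigl(E(\sigma_{\pi',\phi}) \cup E(\sigma_{\pi^*,\phi}),\, \phi^E\bigr)\bigr], \]
so the residual gap at iteration $t$ is bounded by the expected set marginal gain, given $\psi_t^E$, of those optimal-induced hyperedges not yet present in $E(\sigma_t)$. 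By \cref{def:WeakSetAdaptive}, this set gain is at most $1/\gamma$ times a sum of individual hyperedge marginal gains $\Delta(e \mid \psi_t^E)$.

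The blocking argument, which is the only place where $r$ appears explicitly, proceeds as follows. The greedy picks the valid hyperedge maximizing $\Delta(e \mid \psi_t^E)$, where validity means having an endpoint outside $\sigma_t$. A hyperedge that the optimal policy might select becomes unavailable to the greedy only once all of its endpoints lie in $\sigma$; since each greedy iteration appends at most $r$ new vertices, and each such vertex is the terminal endpoint of at most $d_{\text{in}}$ hyperedges, at most $r \cdot d_{\text{in}}$ optimal hyperedges can be blocked per iteration. Combining the per-edge maximality of the greedy's choice with this counting yields a recurrence of the form
\[ \Delta_t \ge \frac{\gamma}{r\, d_{\text{in}}}\bigl(\fa(\pi^*) - \fa(\pi'_t)\bigr), \]
where $\pi'_t$ denotes the greedy truncated at step $t$, and the standard unrolling (identical to the one in the proof of \cref{theory:adaptive-seq}) delivers $\fa(\pi') \ge \frac{\gamma}{r\, d_{\text{in}} + \gamma}\fa(\pi^*)$.

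The main obstacle is handling the adaptivity cleanly inside this blocking step: both $E(\sigma_{\pi^*,\phi})$ and $\phi^E$ depend on the realization, so one must argue, conditional on $\psi_t^E$, that the greedy's chosen edge gain is at least a $1/(r\, d_{\text{in}})$ fraction of the expected residual optimal value. This relies on the fact that the $r\cdot d_{\text{in}}$ blocking bound holds realization-by-realization, and on verifying that appending several new vertices simultaneously (instead of one or two, as in the graph case) does not disrupt the per-step telescoping used in the proof of \cref{theory:adaptive-seq}. Once this is checked, the argument is otherwise a mechanical adaptation of the graph-case proof.
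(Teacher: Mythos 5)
Your overall strategy is the paper's own: the proof of \cref{theory:hyper-adaptive-seq} is obtained from that of \cref{theory:adaptive-seq} by changing the two places where the per-iteration vertex count enters, namely the bound $|\cE_0 \setminus \cE_s| \le 2s\InDegree$ becomes $rs\InDegree$ (each iteration now appends up to $r$ vertices, each the head of at most $\InDegree$ hyperedges), and the bound $|E(\pi^*)| \le 2\ell\InDegree$ becomes $(k-r+1)\InDegree \le r\ell\InDegree$ with $\ell = \lfloor k/r \rfloor$ (this is \cref{lem:hyper_size}); the set $\cE_s$ is also redefined via the prefix condition of \cref{alg:hyper-sequence-greedy}. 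You identify the first change explicitly and the second only implicitly, and you are right that nothing else in the argument is sensitive to $r$.

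One caveat: the per-step recurrence you write, $\Delta_t \ge \frac{\gamma}{r\InDegree}\bigl(\fa(\pi^*) - \fa(\pi'_t)\bigr)$, is not what the proof of \cref{theory:adaptive-seq} establishes, and it does not hold in general: optimal hyperedges that were blocked at earlier iterations may have marginal gain exceeding the greedy's current choice (the greedy maximizes only over $\cE_t$), so the entire residual cannot be charged to the single gain $\Delta_t$. Moreover, unrolling that recurrence in the standard way would produce a $\bigl(1 - e^{-\ell\gamma/(r\InDegree)}\bigr)$-type guarantee rather than $\frac{\gamma}{r\InDegree + \gamma}$. The actual mechanism (Lemmata~\ref{lem:loss} and~\ref{lem:opt-greed-diff}) tracks the potential $\bE_\Phi\bigl[h\bigl((E(\pi^*) \cap \cE_s) \cup E(\pi_s)\bigr)\bigr]$, charges the hyperedges blocked at iteration $s$ to the greedy gain at iteration $s$, and combines the resulting terms by an Abel summation that pairs the cumulative blocked count $rs\InDegree$ with the differences $\Delta(e_s \mid \cdot) - \Delta(e_{s+1} \mid \cdot)$, yielding $\fa(\pi^*) - \fa(\pi_\ell) \le \frac{r\InDegree}{\gamma}\fa(\pi_\ell)$. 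Since you ultimately defer to that proof for the ``unrolling,'' your conclusion stands, but the recurrence as stated should not be presented as the intermediate step.
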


In our proofs, we have to handle the sequential nature of picking items and the revelation of states in a combined setting. 
Unfortunately, the existing proof methods for sequence submodular maximization are not linear enough to allow for the use of the linearity of expectation that captures the stochasticity of the states.
For this reason, we develop a novel analysis technique to guarantee the performance of our algorithms.
Surprisingly, these new techniques improve the theoretical guarantees of the non-adaptive Sequence-Greedy and Hyper Sequence-Greedy \citep{mitrovic18a} by a factor of $\frac{e}{e-1}$.
Proofs for both theorems are given in \cref{proofs}.

\paragraph{General Unifying Framework} One more theoretical point we want to highlight is that weakly adaptive sequence submodularity provides a general unifying framework for a variety of common submodular settings including, adaptive submodularity, weak submodularity, sequence submodularity, and classical set submodularity. If we have $\gamma = 1$ and the state of all vertices is deterministic, then we have sequence submodularity. Conversely, if the vertex states are unknown, but our graph only has self-loops, then we have weakly adaptive set submodularity (and correspondingly adaptive set submodularity if $\gamma = 1$). Lastly, if we have a graph with only self-loops, full knowledge of all states, and $\gamma = 1$, then we recover the original setting of classical set submodularity.

\paragraph{Tightness of Theoretical Results}
We acknowledge that the constant factor approximation we present depends on the maximum in-degree. While ideally the theoretical bound would be completely independent of the structure of the graph, we argue here that such a dependence is likely necessary.

Indeed, getting a dependence better than $O(n^{\nicefrac{1}{4}})$ in the approximation factor (where $n$ is the total number of items) would improve the state-of-the-art algorithm for the very well-studied densest $k$ subgraph problem (DkS) \citep{kortsarz1993on, bhaskara2010detecting}. Moreover, if we could get an approximation that is completely independent of the structure of the graph, then the exponential time hypothesis would be proven false\footnote{If the exponential time hypothesis is true it would imply that P $\neq$ NP, but it is a stronger statement.}.
 In fact, even an almost polynomial approximation would break the exponential time hypothesis \citep{manurangsi2017almost}.  
Next, we formally state this hardness relationship.
The proof is given in \cref{app:hardness}.

\newcommand{\TheoryHardness}{%
Assuming the exponential time hypothesis is correct, there is no algorithm that approximates the optimal solution for the (adaptive) sequence submodular maximization problem within a $n^{1/(\log \log n)^c}$ factor, where $n$ is the total number of items and $c > 0$ is a universal constant  independent of $n$.
}
\begin{theorem} \label{theory:hardness}
\TheoryHardness
\end{theorem}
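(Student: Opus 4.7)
The plan is a direct polynomial-time reduction from the Densest $k$-Subgraph problem (DkS) to (adaptive) sequence submodular maximization, combined with the ETH-hardness of DkS due to \citet{manurangsi2017almost}. Given a DkS instance with undirected graph $H=(V,E_H)$ and parameter $k$, I would build a sequence submodular instance as follows: take ground set $V$, form the directed graph $G=(V,E)$ by replacing each undirected edge $\{u,v\}\in E_H$ with both directed edges $(u,v)$ and $(v,u)$, and set $h(S,\phi^E)=|S|$. This $h$ is non-negative, monotone, and modular, and therefore adaptive monotone and weakly adaptive set submodular with $\gamma=1$; the induced $f(\sigma,\phi)=|E(\sigma)|$ does not depend on $\phi$. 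To embed into the adaptive setting I would take the realization distribution to be a single point mass, making the instance trivially adaptive while preserving $\gamma=1$.

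The combinatorial core of the reduction is the bound
\[
e_H(S) \;\leq\; |E(\sigma)| \;\leq\; 2\, e_H(S),
\]
where $S$ denotes the set of distinct vertices appearing in $\sigma$ and $e_H(S)$ counts the $H$-edges induced on $S$. The lower bound holds because each such undirected edge contributes the single directed copy consistent with the first occurrences of its endpoints in $\sigma$; the upper bound holds because $E(\sigma)$ is a \emph{set}, so each undirected edge contributes at most its two directed copies. Consequently the sequence submodular optimum $\mathrm{OPT}'$ and the DkS optimum $\mathrm{OPT}$ satisfy $\mathrm{OPT}\leq \mathrm{OPT}'\leq 2\,\mathrm{OPT}$, and from any polynomial-time $\alpha$-approximation for sequence submodular maximization I can extract a $2\alpha$-approximation for DkS by returning the distinct vertices of the output sequence (a set of size at most $k$).

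Invoking \citet{manurangsi2017almost}, which rules out an $n^{1/(\log\log n)^{c'}}$-approximation for DkS under ETH for some universal constant $c'>0$, the reduction transfers the hardness: any $n^{1/(\log\log n)^{c}}$-approximation for (adaptive) sequence submodular maximization with $c$ strictly smaller than $c'$ would yield a $2\,n^{1/(\log\log n)^{c}}\leq n^{1/(\log\log n)^{c'}}$-approximation for DkS for all sufficiently large $n$, contradicting ETH. The only real delicacy is handling sequences with repeated vertices, which is dispatched by the set-theoretic upper bound above; everything else is mechanical, and all of the proof's strength is inherited from the deep DkS hardness result.
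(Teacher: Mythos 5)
Your reduction is essentially the paper's: double each undirected edge of the DkS instance into both orientations, take $h$ to be cardinality (hence $\gamma=1$), collapse adaptivity with a point-mass realization, and invoke the ETH-hardness of DkS. The paper assumes the sequence is a permutation of its vertex set, so exactly one of the two oriented copies of each induced edge is selected and the correspondence is lossless ($\alpha \mapsto \alpha$); your sandwich $e_H(S)\le|E(\sigma)|\le 2e_H(S)$ is the more careful variant that also covers sequences with repeated vertices, at the cost of a factor $2$. The only flaw is the direction of the final constant: since $n^{1/(\log\log n)^{c}}$ is \emph{decreasing} in $c$, absorbing the factor $2$ requires $c$ strictly \emph{larger} than $c'$, not smaller — as written, $c<c'$ gives $2n^{1/(\log\log n)^{c}}>n^{1/(\log\log n)^{c'}}$ and no contradiction. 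This is a one-word fix and does not affect the theorem, which only asserts the existence of some universal constant.
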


\section{Experimental Results} \label{sec:experiments}

\subsection{Amazon Product Recommendation} \label{sec:amazon}

Using the Amazon Video Games review dataset \citep{amazonReviews}, we consider the task of recommending products to users. In particular, given the first $g$ products that the user has purchased, we want to predict the next $k$ products that she will buy. Full experimental details are given in \cref{amazonAdditional}.\footnote{Dataset and code are available at \url{https://github.com/ehsankazemi/adaptiveSubseq}.}

We start by using the training data to build a graph $G = (V,E)$, where $V$ is the set of all products and $E$ is the set of edges between these products. The weight of each edge, $w_{ij}$, is defined to be the conditional probability of purchasing product $j$ given that the user has previously purchased product $i$. There are also self-loops with weight $w_{ii}$ that represent the fraction of users that purchased product $i$.

We define the state of each edge $(i,j)$ to be equal to the state of product $i$. The intuitive idea is that edge $(i,j)$ encodes the value of purchasing product $j$ after already having purchased product $i$. Therefore, if the user has definitely purchased $i$ (i.e., product $i$ is in state 1), then they should receive the full value of $w_{ij}$. On the other hand, if she has definitely not purchased $i$ (i.e., product $i$ is in state 0), then edge $(i,j)$ provides no value. Lastly, if the state of $i$ is unknown, then the expected gain of edge $(i,j)$ is discounted by $w_{ii}$, the value of the self-loop on $i$, which can be viewed as a simple estimate for the probability of the user purchasing product $i$. See Figure \ref{amazonGraph} for a small example.

We use a probabilistic coverage utility function as our monotone weakly-adaptive set submodular function $h$. Mathematically, 
\[
h(E_1) = \sum_{j \in V} \Big[ 1 - \prod_{(i,j) \in E_1} (1 - w_{ij}) \Big],
\]
where $E_1 \subseteq E$ is the subset of edges that are in state 1. 

We compare the performance of our Adaptive Sequence-Greedy policy against Sequence-Greedy from \citet{mitrovic18a}, the existing sequence submodularity baseline that does not consider states. To give further context for our results, we compare against Frequency, a naive baseline that ignores sequences and adaptivity and simply outputs the $k$ most popular products. 

We also compare against a set of deep learning-based approaches (see \cref{deepDetails} for full details). In particular, we implement adaptive and non-adaptive versions of both a regular Feed Forward Neural Network and an LSTM. The adaptive version will update its inputs after every prediction to reflect whether or not the user liked the recommendation. Conversely, the non-adaptive version will simply make $k$ predictions using just the original input.

\begin{figure*}[t]
	\centering
    \subfloat[]{\includegraphics[height=1.2in]{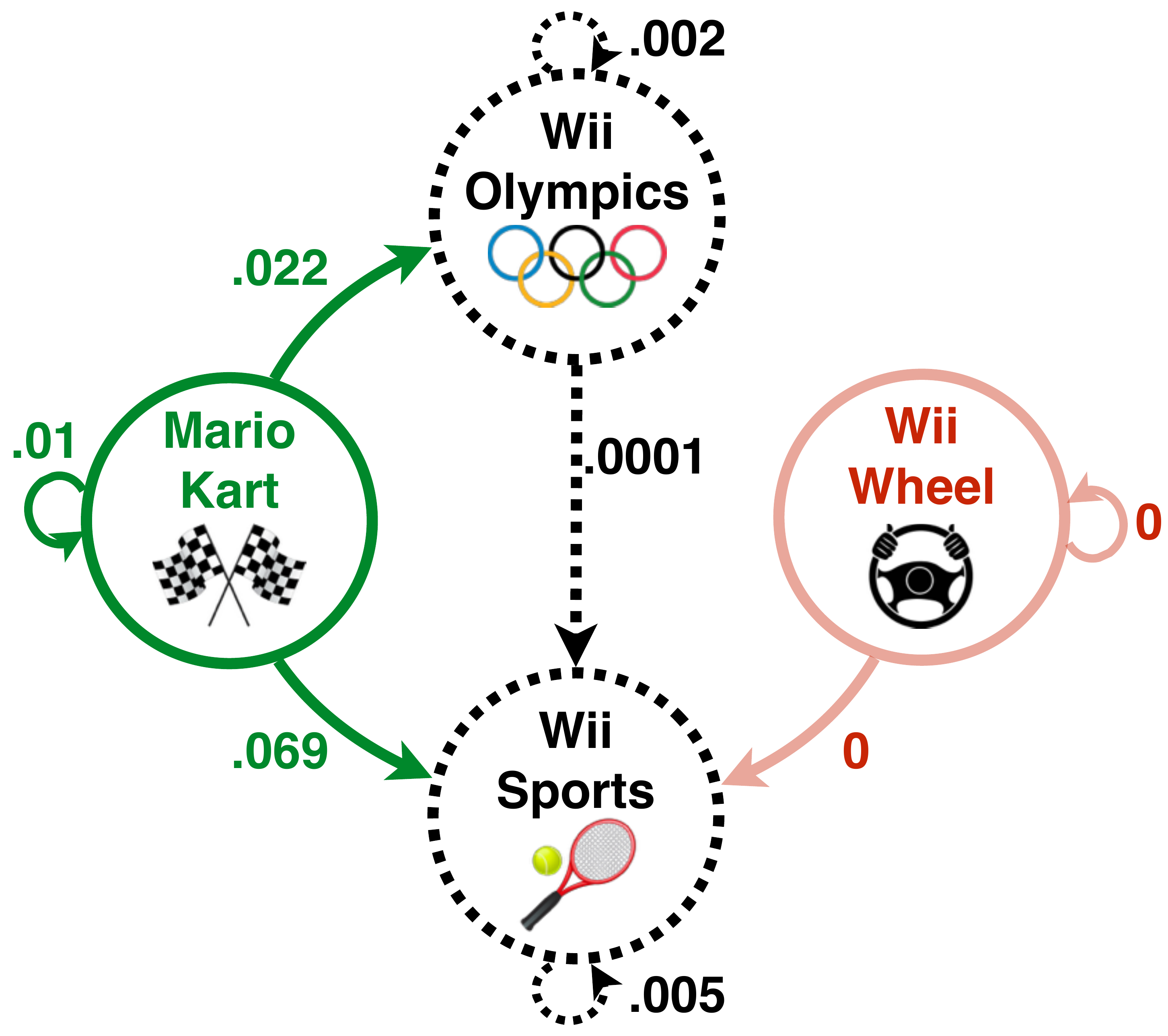}\label{amazonGraph}} \hspace{0.22in}
	\subfloat[]{\includegraphics[height=1.2in]{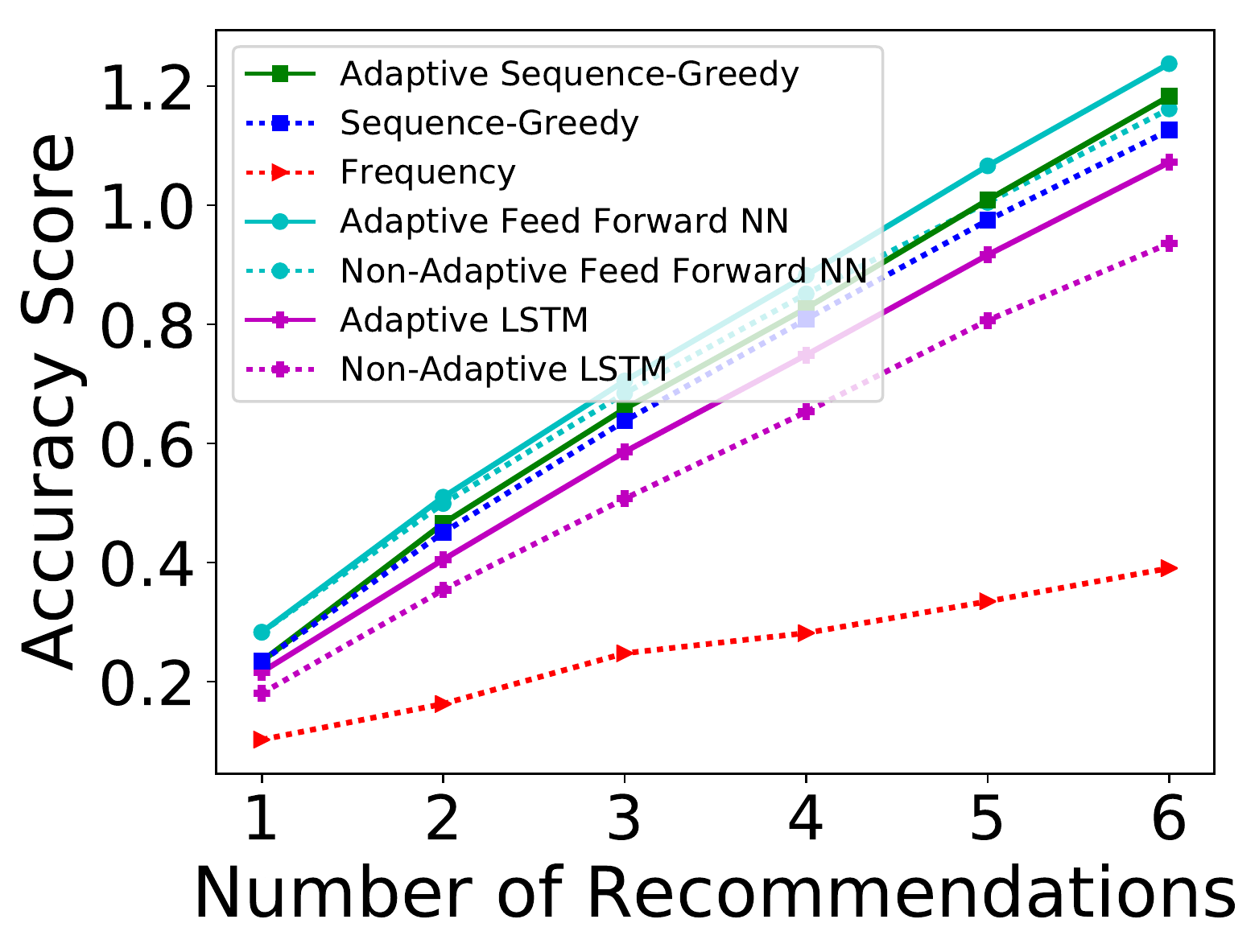}\label{amazon-80-acc}}
	\subfloat[]{\includegraphics[height=1.2in]{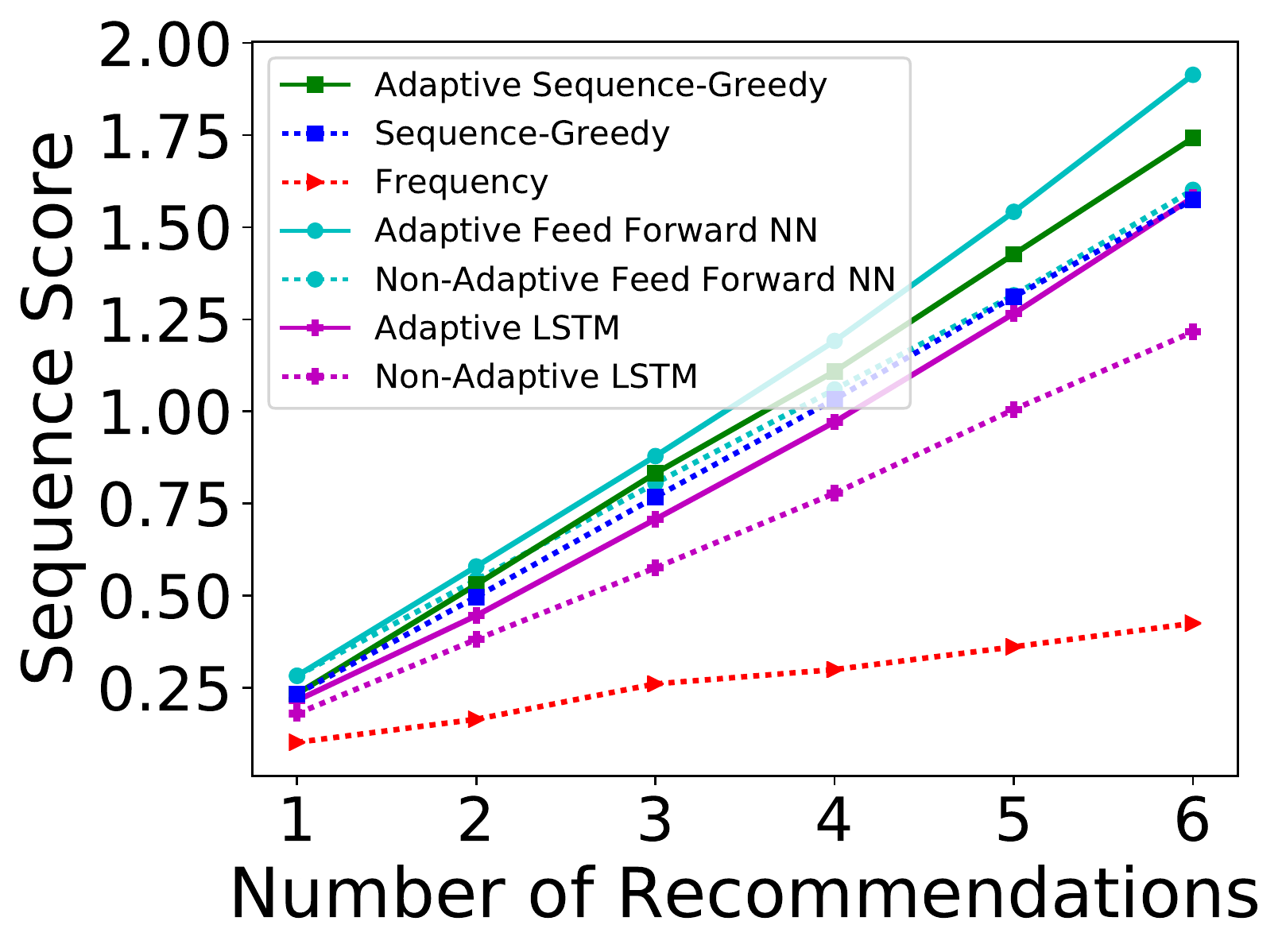}\label{amazon-80-seq}} \\
	\subfloat[]{\includegraphics[height=1.in]{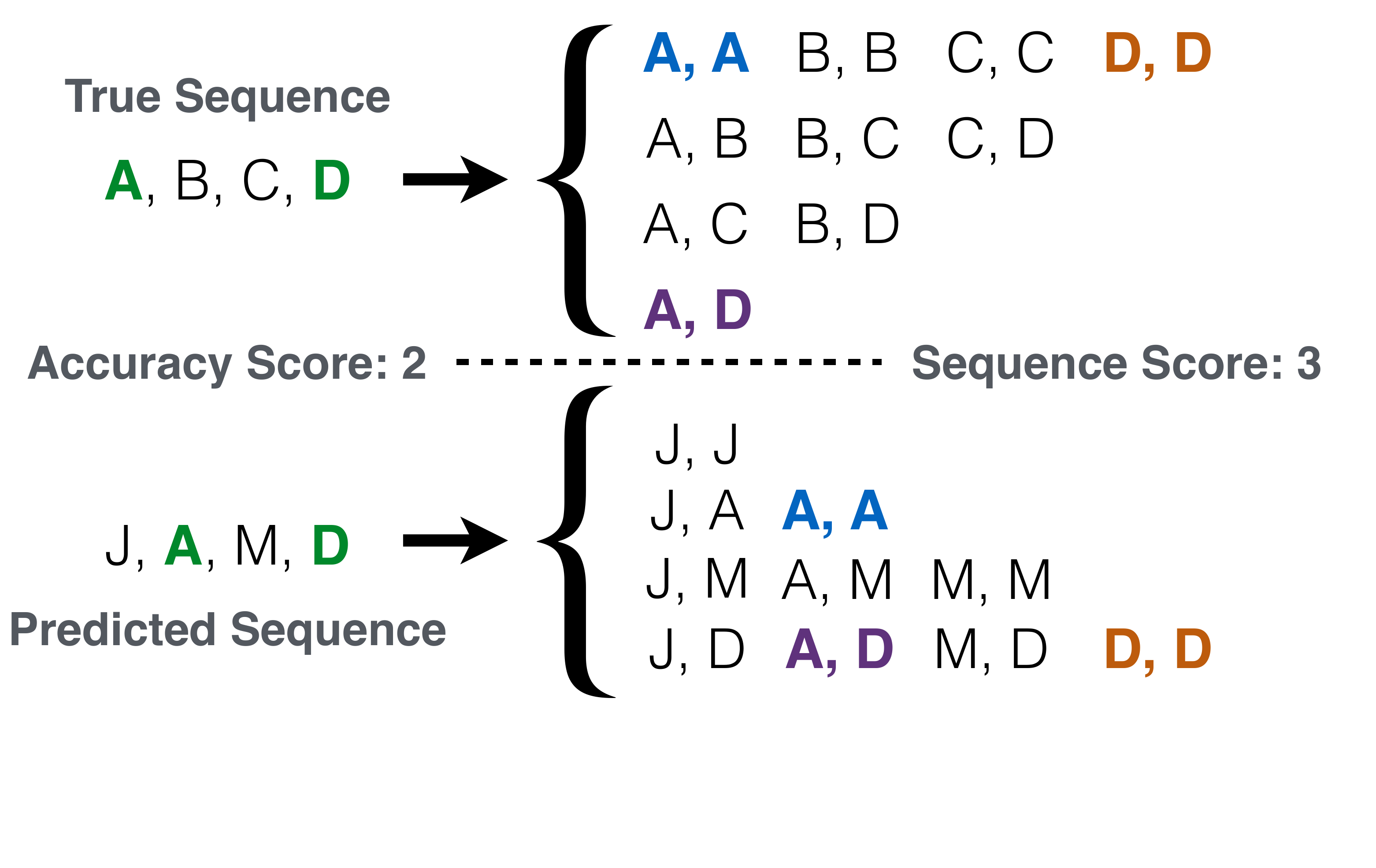}\label{scores}}
	\subfloat[]{\includegraphics[height=1.2in]{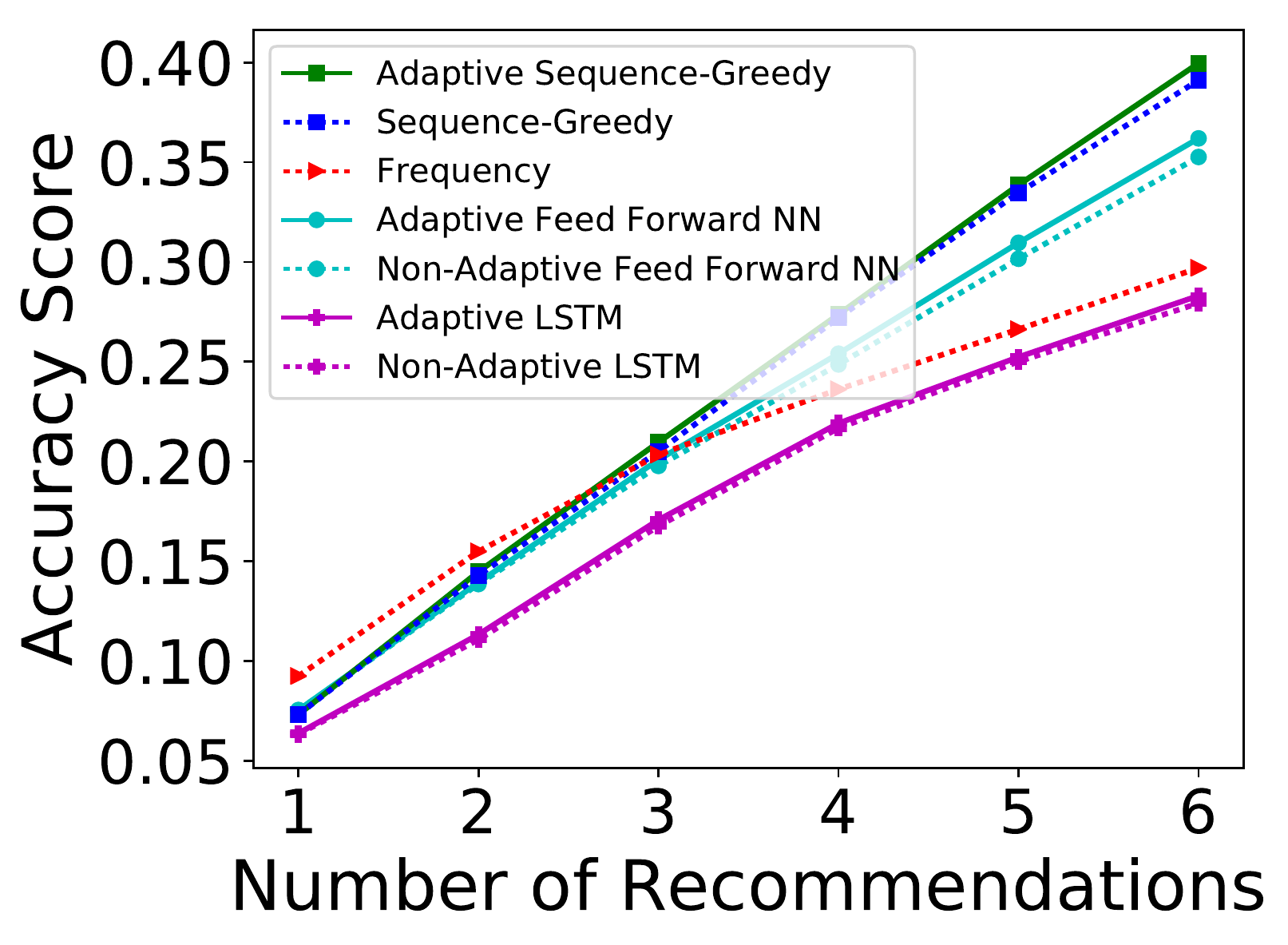}\label{amazon-1-acc}}
	\subfloat[]{\includegraphics[height=1.2in]{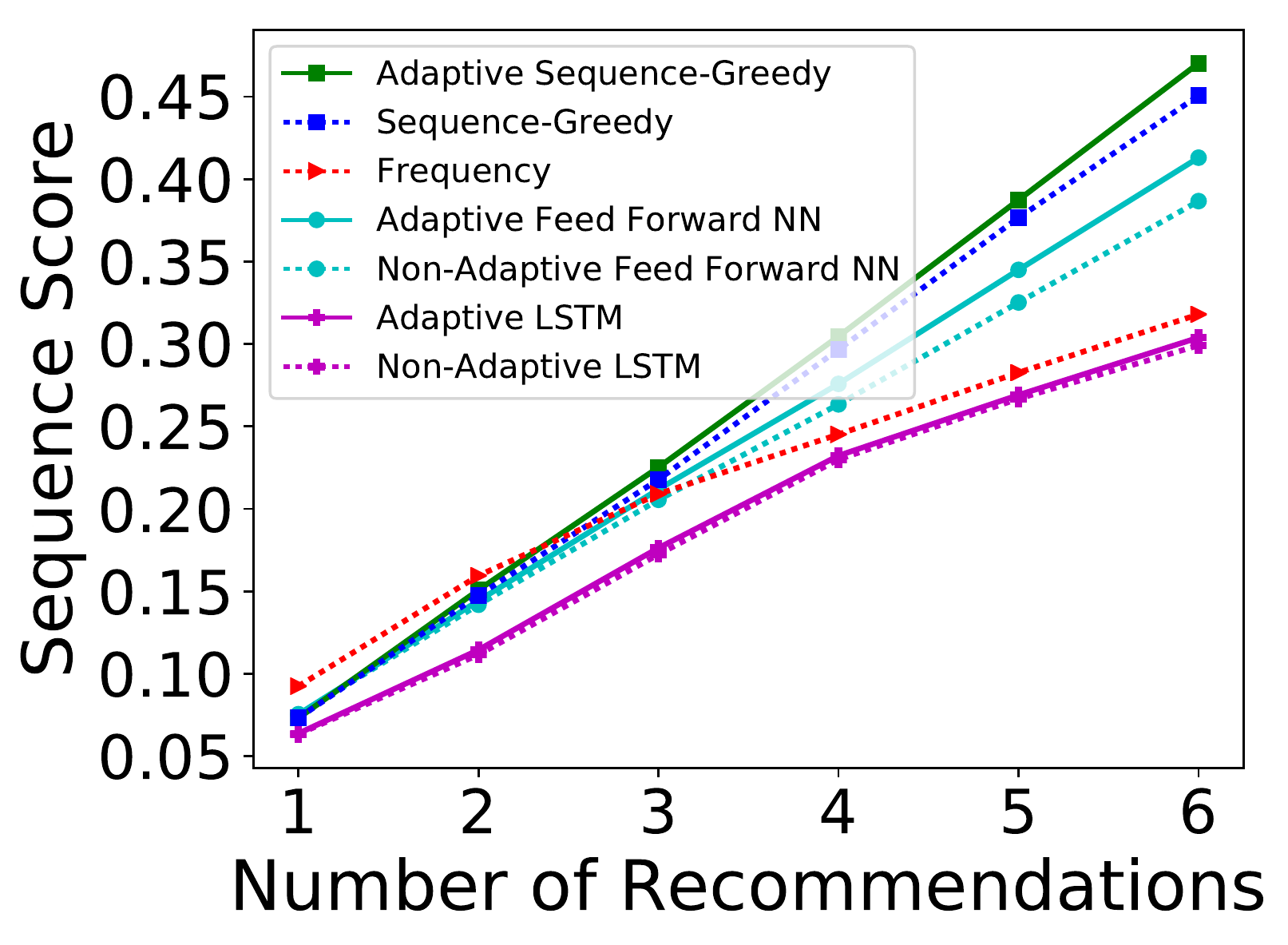}\label{amazon-1-seq}}
	\caption{(a) shows a small subset of the underlying graph with states for a particular user. 
		(b) and (c) show our results on the Amazon product recommendation task. In all these graphs, the number of given products $g$ is 4.
		(d) gives an example illustrating the difference between the two performance measures. (e) and (f) show our results on the same task, but using only 1\% of the available training to show that our algorithm outperforms deep learning-based approaches in data scarce environments.
	}\label{fig:amazon}
\end{figure*}

We use two different measures to compare the various algorithms. The first is the \textbf{Accuracy Score}, which simply counts the number of recommended products that the user indeed ended up purchasing. While this is a sensible measure, it does not explicitly consider the order of the sequence. Therefore, we also consider the \textbf{Sequence Score}, which is a measure based on the Kendall-Tau distance \citep{kendallTau}. In short, this measure counts the number of ordered pairs that appear in both the predicted sequence and the true sequence. Figure \ref{scores} gives an example comparing the two measures.

Figures \ref{amazon-80-acc} and \ref{amazon-80-seq} show the performance of the various algorithms using the accuracy score and sequence score, respectively. These results highlight the importance of adaptivity as the adaptive algorithms consistently outperform their non-adaptive counterparts under both scoring regimes. 
Notice that in both cases, as the number of recommendations increases, our proposed Adaptive Sequence-Greedy policy is outperformed only by the Adaptive Feed Forward Neural Network. Although LSTMs are generally considered better for 
sequence data than vanilla feed-forward networks, we think it is a lack of data that causes them to perform poorly in our experiments.

Another observation, which fits the conventional wisdom, is that deep learning-based approaches can perform well when there is a lot of data. However, when the data is scarce, we see that the Sequence-Greedy based approaches outperform the deep learning-based approaches. Figures \ref{amazon-1-acc} and \ref{amazon-1-seq} simulate a data-scarce environment by using only 1\% of the available data as training data. 
Note that the difference between the adaptive algorithms and their non-adaptive counterparts is less obvious in this setting because the adaptive algorithms use correct guesses to improve future recommendations, but the data scarcity makes it difficult to make a correct guess in the first place.

Aside from competitive accuracy and sequence scores, the Adaptive Sequence-Greedy algorithm provides several advantages over the neural network-based approaches. From a theoretical perspective, the Adaptive Sequence-Greedy algorithm has provable guarantees on its performance, while little is known about the theoretical performance of neural networks. Furthermore, the decisions made by the Adaptive Sequence-Greedy algorithm are easily interpretable and understandable (it is just picking the edge with the highest expected value), while neural networks are generally a black-box. On a similar note, Adaptive Sequence-Greedy may be preferable from an implementation perspective because it does not require any hyperparameter tuning. It is also more robust to changing inputs in the sense that we can easily add another product and its associated edges to our graph, but adding another product to the neural network requires changing the entire input and output structure, and thus, generally necessitates retraining the entire network.

\subsection{Wikipedia Link Prediction} \label{sec:wikipedia}

Using the Wikispeedia dataset~\citep{wikispeedia}, we consider users who are surfing through Wikipedia towards some target article. Given a sequence of articles the user has previously visited, we want to guide her to the page she is trying to reach. Since different pages have different valid links, the order of pages we visit is critical to this task. Formally, given the first $g =3$ pages each user visited, we want to predict which page she is trying to reach by making a series of suggestions for which link to follow. 

In this case, we have $G=(V,E)$, where $V$ is the set of all pages and $E$ is the set of existing links between pages. Similarly to before, the weight $w_{ij}$ of an edge $(i,j) \in E$ is the probability of moving to page $j$ given that the user is currently at page $i$. In this case, there are no self-loops as we assume we can only move using links, and thus we cannot jump to random pages. We again define two states for the nodes: 1 if the user definitely visits this page and 0 if the user does \textit{not} want to visit this page.

This application highlights the importance of adaptivity because the non-adaptive sequence submodularity framework cannot model this problem properly. This is because the Sequence-Greedy algorithm is free to choose any edge in the underlying graph, so there is no way to force the algorithm to pick a link that is connected to the user's current page. On the other hand, with Adaptive Sequence-Greedy, we can use the states to penalize invalid edges, and thus force the algorithm to select only links connected to the user's current page. Similarly, we only have the adaptive versions of the deep learning baselines because we need information about our current page in order to construct a valid path (\cref{deepDetails} gives a more detailed explanation).

Figure \ref{sampleWikiGraph} shows an example of predicted paths, while Figure \ref{wikiScore} shows our quantitative results. More detail about the relevance distance metric is given in \cref{wikiAdditional}, but the idea is that the  it measures the relevance of the final output page to the true target page (a lower score indicates a higher relevance).  The main observation here is that the Adaptive Sequence Greedy algorithm actually outperforms the deep-learning based approaches. The main reason for this discrepancy is likely a lack of data as we have 619 pages to choose from and only 7,399 completed search paths.

\begin{figure*}[htb!]
	\centering
	\subfloat[]{\includegraphics[height=1.44in]{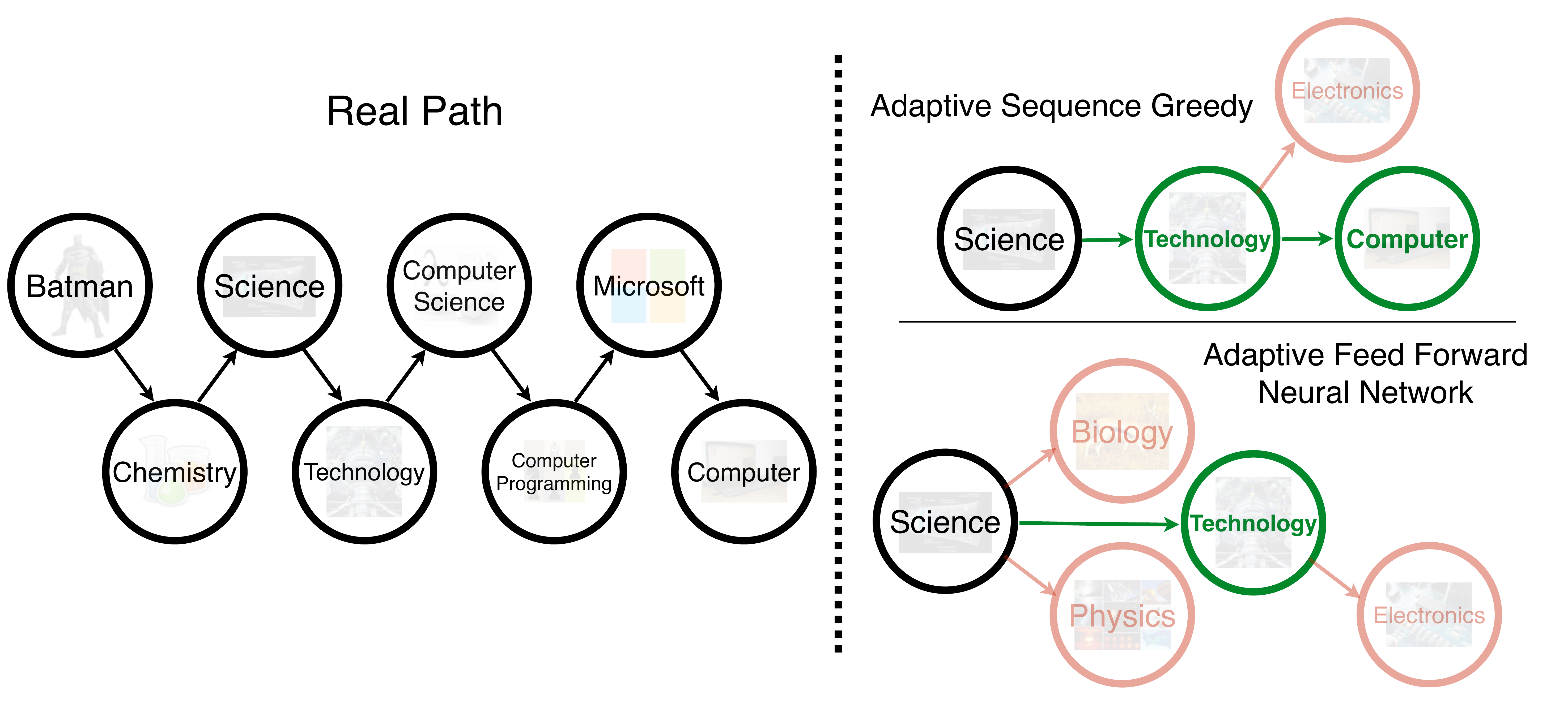}\label{sampleWikiGraph}} \hspace{0.05in}
	\subfloat[]{\includegraphics[height=1.44in]{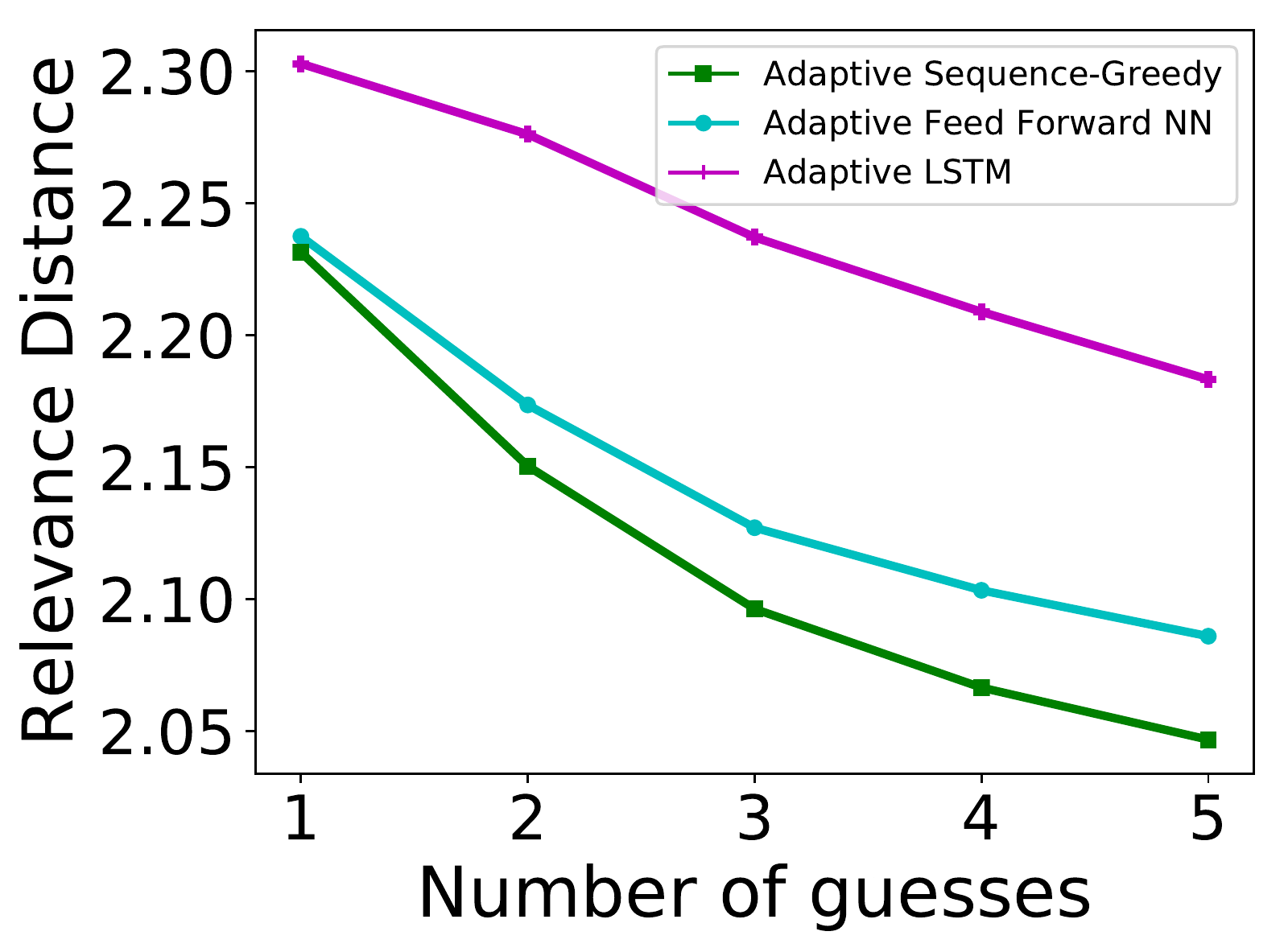}\label{wikiScore}}
	\caption{(a) The left side shows the real path a user followed from \textit{Batman} to \textit{Computer}. Given the first three pages, the right side shows the path predicted by Adaptive Sequence Greedy versus 
		a deep learning-based approach. Green shows correct guesses that were followed, while red shows incorrect guesses that were not pursued further. (b) shows the overall performance of the various approaches.}
	\label{fig:wiki}
\end{figure*}

\section{Conclusion}

In this paper we introduced adaptive sequence submodularity, a general framework for bringing tractability to the broad class of optimization problems that consider both sequences and adaptivity. We presented Adaptive Sequence-Greedy---a general policy for optimizing weakly adaptive sequence submodular functions. 
In addition to providing a provable theoretical guarantee for our algorithm (as well as a discussion about the tightness of this result), we also evaluated its performance on an Amazon product recommendation task and a Wikipedia link prediction task. 
Not only does our Adaptive Sequence-Greedy policy exhibit competitive performance with the state-of-the-art, but it also provides several notable advantages, including interpretability, ease of implementation, and robustness against both data scarcity and input adjustments.

\paragraph{Acknowledgement} Moran Feldman was supported by ISF grant number 1357/16.

\bibliographystyle{plainnat}
\bibliography{adaptiveSubseqMain}

\newpage
\appendix
\section{Table of Notations} \label{sec:notation}
\begin{table}[H]
\centering 
\caption{} \label{table:symbol-table}
		{\small
	\begin{tabular}{|l|p{4.5in}|}
		\hline
		$\ground$ & Ground set of elements. \\
		\hline 
		 $ e \in \ground$ & An individual element from $\ground$. \\
		\hline
		$\phi$ & A realization, i.e., a function from elements to states. \\
		\hline
		$\psi$ & A partial realization to encoding the current set of observations. \\
		\hline
		$\dom(\psi) $ & Domain of a partial realization $\psi$ is defined as $\dom(\psi) = \{ e : \exists o \text{ .s.t. } (o,e) \in \psi \}$.  \\
		\hline
		$\Phi, \Psi$ & A random realization and a random partial
		realization, respectively.\\
		\hline
		$\sim$ & For a realization $\phi$ and a partial realization $\psi$: $\phi \sim \psi$ means $\psi(e) = \phi(e)$ for all $e \in \dom(\psi)$.\\
		\hline
		$p(\phi)$ & The probability distribution on realizations.\\
		\hline
		$p(\phi \mid \psi)$ & The conditional distribution on realizations: $p(\phi \mid \psi) \triangleq \Pr[\Phi = \phi \mid \Phi \sim \psi]$.\\
		\hline
		$\pi$ & A policy, which maps partial realizations to items. \\
		\hline
		$E(\pi, \phi)$ & The set of all edges induced  by $\pi$ when run
		under realization $\phi$. \\
		\hline
			$h$ & An objective function of type $h:2^{\ground} \times O^{\ground} \to \bR_{\geq 0}$.\\
		\hline
		$\Delta(e \mid \psi)$ & The conditional expected marginal benefit of $e$ conditioned on $\psi$. \\
		\hline
		$k$ & Budget on the number of selected items.\\
		\hline
	\end{tabular}
}
\end{table}
\section{Proofs} \label{proofs}
In this section, we prove \cref{theory:adaptive-seq,theory:hyper-adaptive-seq}. Towards this goal, we first state some necessary definitions and notations, and present a few results regarding weakly adaptive submodular functions.

\subsection{Weakly Adaptive Sequence Submodular}

\paragraph{Notation}
The random variable $\Phi$ denotes a random realization with respect to the distribution $p(\Phi = \phi)$ over the items (or equivalently vertices of the graph).\footnote{Note that there is a one to one correspondence between a realization $\phi$ over the vertices and a realization $\phi^E$ over the edges.}
For a set $A$, its partial realization (i.e., items in $A$ and their corresponding states) is shown by $\psi_A = \{ (e, O(e))  \mid e \in A \}$, where $O(e)$ gives the state of $e$. 
For a partial realization $\psi$, we define $\dom(\psi) = \{ e : \exists \ o \text{ s.t. } (o,e) \in \psi \}$. 
We use $\Psi_A$ to denote a random partial realization over a set $A$.
Note that the distribution of random variable $\Psi_A$ is uniquely defined by the distribution of random variable $\Phi$.
A partial realization $\psi$ is consistent with a realization $\phi$ (we write $\phi \sim \psi$) if they are equal, i.e., they are in the same state, everywhere in the domain of $\psi$.
For the ease of notation, we define $h(\psi) \triangleq h(\dom(\psi), O(\psi))$, where $O(\psi)$ is the state of items in the realization $\psi$.
We also define $\ha(A) \triangleq \bE_\Phi(h(A)) \triangleq \bE_\Phi[h(\Phi_A)]$ which is the expected utility of set $A$ (and states of its elements) over all possible realizations of $A$ under the probability distribution $p(\Phi = \phi)$.
We define $\Delta(e \mid  \psi) = \bE_{\Phi \sim \psi} [ h(\Psi_{\{e\}} + \psi) - h(\psi)]$ which is the conditional expected marginal benefit of item $e$ conditioned on having observed the subrealization $\psi$. 
Note that the random variable $\Psi_{\{e\}}$ is the state of item $e$ with respect to the probability distribution $p(\Phi = \phi \mid \Phi \sim \psi)$.
Similarly, we define $\Delta(A \mid  \psi) = \bE_{\Phi \sim \psi} [ h(\Psi_{A} + \psi) - h(\psi)]$ which is the expected marginal gain of set $A$ to the partial realization $\psi$.
Assume $E(\pi_{\phi})$ is the set of edges induced by  the set of items policy $\pi$ selects under the realization $\phi$. 
The expected utility of policy $\pi$ is defined as $\fa(\pi) \triangleq \ha(E(\pi)) = \bE_{\Phi}[h(E(\pi_{\Phi})]$, where the expectation is taken with respect to $p(\Phi = \phi)$.
For a list of all the notations used in the paper refer to \cref{table:symbol-table}  in \cref{sec:notation}.

Next, we restate the definitions for weakly adaptive set submodular and adaptive monotone functions.

\begin{repdefinition}{def:WeakSetAdaptive}
\defWeakSetAdaptive
\end{repdefinition}

\begin{repdefinition}{def:MonotoneAdaptive}
	\defMonotoneAdaptive
\end{repdefinition}

\cref{def:WeakSetAdaptive} is the generalization of both weak submodularity \citep{das2011submodular} and  adaptive submodularity  \citep{golovin2011adaptive} concepts.

Next, we state a few useful claims regarding weakly adaptive submodular functions.

First note  for all $\psi$  and for every set $A \subseteq \ground \setminus \dom(\psi)$, from \cref{def:WeakSetAdaptive} and the fact that $\psi \subseteq \psi$, we have 
	\begin{align} \label{eq:weak-gain}
\Delta(A\mid\psi) \leq \frac{1}{\gamma} \cdot \sum_{e \in A} \Delta(e\mid\psi) .
\end{align}

\begin{lemma} \label{lem:setdiff-marginal-gain}
	For all $\psi$ and $A \subseteq B \subseteq \ground \setminus \dom(\psi)$, we have
	\begin{align*}
	\Delta(B \mid \psi) - \Delta(A \mid \psi) \leq  \frac{1}{\gamma} \cdot \sum_{e \in B \setminus A} \Delta(e \mid \psi).
	\end{align*}
\end{lemma}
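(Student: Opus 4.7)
The plan is to apply the tower (iterated-expectation) property to rewrite $\Delta(B \mid \psi) - \Delta(A \mid \psi)$ as an expectation of a single marginal-gain term $\Delta(B \setminus A \mid \psi')$, where $\psi'$ is a random extension of $\psi$ by the observations on $A$. Once in that form, Definition~\ref{def:WeakSetAdaptive} applied to the pair $\psi \subseteq \psi'$ with the set $B \setminus A$ produces exactly the required bound.

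Concretely, first rewrite the difference using the definition of marginal gain and the assumption $A \subseteq B \subseteq \Omega \setminus \dom(\psi)$:
\[
\Delta(B \mid \psi) - \Delta(A \mid \psi) \;=\; \mathbb{E}_{\Phi \sim \psi}\bigl[\, h(\psi \cup \Psi_B) - h(\psi \cup \Psi_A) \,\bigr],
\]
where $\Psi_A$ and $\Psi_B$ are the partial realizations of $A$ and $B$ induced by $\Phi$. Splitting $\Psi_B = \Psi_A \cup \Psi_{B \setminus A}$ and conditioning on $\Psi_A$ before averaging over the remaining randomness, the tower property gives
\[
\Delta(B \mid \psi) - \Delta(A \mid \psi) \;=\; \mathbb{E}_{\Psi_A \sim \psi}\bigl[\, \Delta(B \setminus A \mid \psi \cup \Psi_A) \,\bigr].
\]

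Next, for each realization of $\Psi_A$, set $\psi' := \psi \cup \Psi_A$. Because $A$ is disjoint from $\dom(\psi)$, we have $\psi \subseteq \psi'$, so Definition~\ref{def:WeakSetAdaptive} applied to $B \setminus A$ with this pair yields
\[
\Delta(B \setminus A \mid \psi') \;\leq\; \frac{1}{\gamma} \sum_{e \in B \setminus A} \Delta(e \mid \psi).
\]
The right-hand side depends only on $\psi$ (not on the random $\Psi_A$), so taking expectation over $\Psi_A$ on both sides of the inequality preserves the bound and yields the lemma.

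The argument is short; the only point that needs care is the bookkeeping in the tower-property step, namely that after conditioning on $\Psi_A$ the bound is applied with $\psi' = \psi \cup \Psi_A$ on the left but with the \emph{original} $\psi$ governing the single-element terms on the right — which is exactly the asymmetry built into Definition~\ref{def:WeakSetAdaptive}. No substantive obstacle is anticipated.
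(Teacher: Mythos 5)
Your proposal is correct and follows essentially the same route as the paper: both condition on the realization $\Psi_A$ to rewrite the difference as an average of $\Delta(B \setminus A \mid \psi + \Psi_A)$, apply Definition~\ref{def:WeakSetAdaptive} to the pair $\psi \subseteq \psi + \Psi_A$ with the set $B \setminus A$, and use the fact that the resulting bound is independent of $\Psi_A$ so the averaging is harmless. The paper merely writes the tower-property step out as an explicit double sum over partial realizations rather than invoking it by name.
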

\begin{proof}
	We have
	\begin{align*}
	 \Delta(B \mid \psi) - \Delta(A \mid \psi)  &  = \sum{} \Pr[\Psi_A = \psi' \mid \Phi \sim \psi] \cdot \sum{} \Pr[\Psi_{B \setminus A} = \psi'' \mid \Phi \sim \psi + \psi'] \\
	&  \hspace{4.8cm} \cdot  \left( \ha(\psi + \psi' + \psi'') - \ha(\psi + \psi' ) \right)  \\
	&  = \sum{} \Pr[\Psi_A = \psi' \mid \Phi \sim \psi] \cdot  \Delta(B \setminus A \mid \psi + \psi') \leq \frac{1}{\gamma} \cdot \sum_{e \in B \setminus A} \Delta(e \mid \psi),
	\end{align*}
	where the inequality is derived from the definition of weakly adaptive set submodular functions (see \cref{def:WeakSetAdaptive}) and the fact that  $\sum{} \Pr[\Psi_A = \psi' \mid \Phi \sim \psi]  = 1 $.
\end{proof}

\begin{corollary} \label{cor:setdiffsize-marginal-gain}
	For all $\psi$, $e^* = \argmax_{e \in \ground} \Delta(e \mid \psi)$ and two random subsets $A \subseteq B \subseteq \ground \setminus \dom(\psi)$ whose randomness might depend on the realization, we have
	\begin{align*}
	\bE[\Delta(B \mid \psi) - \Delta(A \mid \psi) \mid \Phi \sim \psi] \leq  \frac{\bE[|B \setminus A| \mid \Phi \sim \psi]}{\gamma} \cdot  \Delta(e^* \mid \psi).
	\end{align*}
\end{corollary}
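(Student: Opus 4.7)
The plan is to derive this as an almost immediate consequence of \cref{lem:setdiff-marginal-gain}, by applying that lemma pointwise for each realization and then taking expectation.

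First, I fix any realization $\phi$ with $\phi \sim \psi$. Since the randomness of $A$ and $B$ depends on the realization, once $\phi$ is fixed the sets $A(\phi)$ and $B(\phi)$ are deterministic subsets of $\ground \setminus \dom(\psi)$ satisfying $A(\phi) \subseteq B(\phi)$. I can then invoke \cref{lem:setdiff-marginal-gain} for these specific deterministic sets to obtain
\[
\Delta(B(\phi) \mid \psi) - \Delta(A(\phi) \mid \psi) \;\leq\; \frac{1}{\gamma} \sum_{e \in B(\phi) \setminus A(\phi)} \Delta(e \mid \psi).
\]

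Next, I use the defining property of $e^*$: since $e^* = \argmax_{e \in \ground} \Delta(e \mid \psi)$, every term in the sum satisfies $\Delta(e \mid \psi) \leq \Delta(e^* \mid \psi)$. Hence the right-hand side is bounded above by $\frac{|B(\phi) \setminus A(\phi)|}{\gamma} \cdot \Delta(e^* \mid \psi)$. Crucially, $\Delta(e^* \mid \psi)$ is a function of $\psi$ only and does not depend on the particular $\phi \sim \psi$, so it can be treated as a constant in the final averaging step.

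Finally, I take expectation over $\Phi$ conditioned on $\Phi \sim \psi$. By linearity of expectation, the left-hand side becomes $\bE[\Delta(B \mid \psi) - \Delta(A \mid \psi) \mid \Phi \sim \psi]$, while the right-hand side becomes $\frac{\bE[|B \setminus A| \mid \Phi \sim \psi]}{\gamma} \cdot \Delta(e^* \mid \psi)$, which is exactly the claimed bound. There is no real obstacle here; the only subtlety worth flagging is that one must apply \cref{lem:setdiff-marginal-gain} pointwise in $\phi$ before averaging, since the lemma was stated for deterministic sets, but this is harmless as $A(\phi) \subseteq B(\phi) \subseteq \ground \setminus \dom(\psi)$ holds for every $\phi \sim \psi$ by assumption.
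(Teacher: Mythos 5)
Your proof is correct and follows essentially the same route as the paper: apply \cref{lem:setdiff-marginal-gain} pointwise for each realization consistent with $\psi$, bound each summand by $\Delta(e^* \mid \psi)$ using the definition of $e^*$, and then take the conditional expectation. The only difference is that you spell out the pointwise-then-average step more explicitly than the paper does, which is a reasonable clarification but not a different argument.
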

\begin{proof}
By taking expectation over the guarantee of \cref{lem:setdiff-marginal-gain}, we get
\begin{align*}
	\bE[\Delta(B \mid \psi) - \Delta(A \mid \psi) \mid \Phi \sim \psi]
	\leq{} &
	\frac{1}{\gamma} \cdot \bE\left[\sum_{e \in B \setminus A} \Delta(e \mid \psi) \mid \Phi \sim \psi\right]\\
	\leq{} &
	\frac{1}{\gamma} \cdot \bE\left[\sum_{e \in B \setminus A} \Delta(e^* \mid \psi) \mid \Phi \sim \psi\right]
	\\
	={} &  \frac{\bE[|B \setminus A| \mid \Phi \sim \psi]}{\gamma} \cdot  \Delta(e^* \mid \psi),
\end{align*}
where the second inequality follows from the fact that $e^*$ is the element with the largest expected gain.
\end{proof}

The following observation is an immediate consequence of \cref{def:MonotoneAdaptive}.
\begin{observation} \label{obs:monotone}
	For any two (possibly random) subsets $A \subseteq B \subseteq \ground$, we have
	\begin{align*}
	\bE_\Phi(h(A)) \leq \bE_\Phi(h(B)).
	\end{align*}
\end{observation}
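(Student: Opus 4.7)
The plan is to reduce to the deterministic-set case and then use adaptive monotonicity to grow $A$ into $B$ one element at a time, verifying that each growth step contributes non-negatively in expectation via the tower rule over partial realizations.

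First, I would condition on any auxiliary randomness that defines the pair $(A,B)$, reducing to showing $\bE_\Phi[h(A_0)] \leq \bE_\Phi[h(B_0)]$ for every realized deterministic pair $A_0 \subseteq B_0$; linearity of expectation over the randomness of $(A,B)$ then yields the random-set version. In the common case where $A = A(\Phi)$ and $B = B(\Phi)$ are both produced by an adaptive policy---for example, $A$ is the set of items selected after $t$ steps and $B$ is the set after $t+s$ steps of the same policy---this conditioning may be phrased equivalently in terms of the partial realization the policy has observed so far.

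Next, for deterministic $A_0 \subseteq B_0$, I would enumerate the extra elements $B_0 \setminus A_0 = \{e_1, \ldots, e_m\}$ in any fixed order and define the chain $A_j := A_0 \cup \{e_1, \ldots, e_j\}$ with $A_0$ at one end and $A_m = B_0$ at the other. Telescoping gives
\[
\bE_\Phi[h(B_0)] - \bE_\Phi[h(A_0)] = \sum_{j=1}^{m} \bigl(\bE_\Phi[h(A_j)] - \bE_\Phi[h(A_{j-1})]\bigr).
\]
Each increment is handled by conditioning on the partial realization $\Psi_{A_{j-1}}$ of the elements of $A_{j-1}$ under $\Phi$; by the tower rule,
\[
\bE_\Phi\bigl[h(A_j) - h(A_{j-1})\bigr] = \bE_{\Psi_{A_{j-1}}}\!\bigl[\Delta(e_j \mid \Psi_{A_{j-1}})\bigr],
\]
which is non-negative at every outcome of $\Psi_{A_{j-1}}$ by adaptive monotonicity (\cref{def:MonotoneAdaptive}). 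Hence every term in the telescoped sum is non-negative, proving the claim.

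There is essentially no technical obstacle, since the observation is promised to follow immediately from adaptive monotonicity. The only point requiring care is to condition on the partial realization $\Psi_{A_{j-1}}$---the states of the already-included elements---rather than on the full realization $\Phi$, so that the definitional form of adaptive monotonicity (which asserts $\Delta(e_j \mid \psi) \geq 0$ for every partial realization $\psi$) can be invoked directly on each summand.
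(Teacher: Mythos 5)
Your deterministic core is exactly the argument the paper leaves implicit when it calls the observation an ``immediate consequence'' of \cref{def:MonotoneAdaptive}: enumerate $B_0 \setminus A_0$, telescope, and apply the tower rule so that each increment equals $\bE\bigl[\Delta(e_j \mid \Psi_{A_{j-1}})\bigr] \geq 0$. For deterministic nested sets (or random sets whose randomness is independent of $\Phi$) this is correct and complete.

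The gap is in your opening reduction. Conditioning on the ``auxiliary randomness that defines $(A,B)$'' and then invoking the deterministic statement is only legitimate when that randomness is independent of $\Phi$. When $A$ and $B$ are functions of $\Phi$ --- which is precisely how the observation is used in the paper, e.g.\ in \cref{lem:loss} with sets built from $E(\pi^*)$ and the greedy policy's edge sets --- conditioning on the event $\{(A,B)=(A_0,B_0)\}$ changes the conditional law of $\Phi$, so the conditional expectation of $h(A_0)$ on that event is not $\bE_\Phi[h(A_0)]$, and the linearity-of-expectation step does not go through. In fact, for arbitrary $\Phi$-dependent nested random sets the claim can fail outright: take $\ground=\{a,b\}$ with independent uniform binary states and $h(S,\phi)=\max\bigl(0,\,1+|\{e \in S : \phi(e)=1\}|-|\{e \in S : \phi(e)=0\}|\bigr)$. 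One checks $\Delta(e \mid \psi)\geq 0$ for every partial realization $\psi$, yet with $A=\{a\}$ always and $B=\{a,b\}$ exactly when $\phi(b)=0$ (so $A \subseteq B$ always) one gets $\bE[h(A)]=1 > \tfrac{3}{4}=\bE[h(B)]$. So some adaptedness is needed: the extra elements of $B \setminus A$ must be included based only on already-observed states. Your parenthetical remark about policy-generated sets points at the right repair --- condition on the observed partial realization $\psi$, under which the conditional law of $\Phi$ is $p(\cdot \mid \psi)$ and the added elements are determined by $\psi$, then telescope using $\Delta(e \mid \psi')\geq 0$ for the partial realizations $\psi' \supseteq \psi$ encountered along the chain --- but as written the blanket reduction to deterministic pairs is not justified, and the ``possibly random'' clause is exactly where the care is required.
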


\begin{lemma}
	Assume $h$ is adaptive monotone and weakly adaptive 
	set submodular with a parameter $\gamma$ with respect to the distribution $p(\phi)$, and $\pi$ is 
	a greedy policy which picks the item with the largest expected marginal gain at each step, then for all policies $\pi^{*}$ we have 
	\[ \ha(\pi) \geq \left(1 - e^{-\nicefrac{1}{\gamma}}\right) \cdot \ha(\pi^*). \]
\end{lemma}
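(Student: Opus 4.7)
The plan is to adapt the classical greedy analysis for adaptive submodular maximization to the weakly adaptive setting by tracking per-step progress against the optimal policy $\pi^*$. Let $\pi_i$ denote the greedy policy truncated to its first $i$ selections, and let $\Psi_i$ be the random partial realization it observes; so $\pi_0$ is empty and $\pi_k = \pi$. I would first establish a Golovin--Krause-style decomposition: for each $i$,
\[\ha(\pi^*) - \ha(\pi_i) \leq \mathbb{E}_{\Psi_i}\!\left[\Delta(A^*(\Psi_i) \mid \Psi_i)\right],\]
where $A^*(\Psi_i)$ denotes the (random) set of items $\pi^*$ would select when launched from the observations $\Psi_i$. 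Intuitively this says that the remaining value of $\pi^*$ cannot exceed what $\pi^*$ would contribute as a marginal on top of greedy's current observations; it relies on adaptive monotonicity (\cref{def:MonotoneAdaptive}) together with a coupling of the two policies' randomness over $\Phi$.

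Next I would apply \cref{cor:setdiffsize-marginal-gain} with $A = \emptyset$ and $B = A^*(\Psi_i)$ to pass from the bulk gain to individual gains. Since $|A^*(\Psi_i)| \leq k$ and $e^*_i$ (the greedy's next pick) maximizes $\Delta(\,\cdot\, \mid \Psi_i)$ by construction,
\[\mathbb{E}\!\left[\Delta(A^*(\Psi_i) \mid \Psi_i)\right] \leq \frac{k}{\gamma}\,\mathbb{E}\!\left[\Delta(e^*_i \mid \Psi_i)\right] = \frac{k}{\gamma}\bigl(\ha(\pi_{i+1}) - \ha(\pi_i)\bigr),\]
using that the expected marginal gain of the greedy's choice equals the increment in its expected value. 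Combining with the previous display and writing $\delta_i \triangleq \ha(\pi^*) - \ha(\pi_i)$ yields the standard contraction
\[\delta_{i+1} \leq \bigl(1 - \tfrac{\gamma}{k}\bigr)\delta_i.\]
Iterating $k$ times and using $\delta_0 \leq \ha(\pi^*)$ gives $\delta_k \leq (1 - \gamma/k)^k\,\ha(\pi^*)$; applying the inequality $(1-x)^k \leq e^{-kx}$ and rearranging produces a multiplicative bound of the stated exponential form.

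The main obstacle will be making the decomposition in the first step fully rigorous. In the standard adaptive submodular proof one crucially uses the \emph{pointwise} inequality $\Delta(e \mid \psi) \geq \Delta(e \mid \psi')$ for $\psi \subseteq \psi'$, but under only weak adaptive submodularity we have just the bulk comparison of \cref{def:WeakSetAdaptive}. The right move, I believe, is to define $A^*(\Psi_i)$ as the random set of items $\pi^*$ selects on the same underlying realization $\Phi$ conditioned on the observed $\Psi_i$, use the tower property to rewrite $\pi^*$'s adaptive expectation as an expectation conditioned on $\Psi_i$, and then apply \cref{def:WeakSetAdaptive} with $\psi = \psi' = \Psi_i$ so the conditioning sets on both sides of the weak-submodularity inequality match. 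Once that delicate bookkeeping is done, \cref{cor:setdiffsize-marginal-gain} plus the telescoping recursion are essentially routine.
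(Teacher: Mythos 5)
Your approach is the same as the paper's: the paper gives no details at all for this lemma, saying only that it ``follows the same line of argument as the proof of Theorem~5 of \citet{golovin2011adaptive},'' and what you have written out is precisely that argument adapted to the weak parameter --- the concatenation/decomposition bound from adaptive monotonicity, the passage from the bulk gain of $\pi^*$'s set to $\tfrac{k}{\gamma}$ times the greedy increment via \cref{cor:setdiffsize-marginal-gain}, and the telescoping contraction. Your remarks about where the delicacy lies (coupling the two policies on the same $\Phi$ and applying \cref{def:WeakSetAdaptive} with matching conditioning) are exactly the right places to worry, and are handled in the cited proof.

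One point you should not gloss over with ``of the stated exponential form'': your recursion is $\delta_{i+1} \leq (1-\tfrac{\gamma}{k})\delta_i$, which yields $\delta_k \leq e^{-\gamma}\,\ha(\pi^*)$ and hence the guarantee $\ha(\pi) \geq (1-e^{-\gamma})\cdot\ha(\pi^*)$, \emph{not} $(1-e^{-\nicefrac{1}{\gamma}})$ as the lemma states. Under \cref{def:WeakSetAdaptive} (where the single-element sum is multiplied by $\tfrac{1}{\gamma}$ and the paper's own example has $\gamma \leq 0.5$), these two constants differ whenever $\gamma < 1$, and $1-e^{-\nicefrac{1}{\gamma}}$ is the strictly stronger claim that your argument does not deliver. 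The exponent $\nicefrac{1}{\gamma}$ would arise only under the reciprocal convention $\Delta(A\mid\psi') \leq \gamma\sum_{e\in A}\Delta(e\mid\psi)$ with $\gamma \geq 1$; with the definition as written, $1-e^{-\gamma}$ is the correct bound (and matches the standard weak-submodularity guarantee of \citet{das2011submodular}). So your derivation is sound, but you should state the constant you actually obtain and note the mismatch with the lemma as printed rather than asserting agreement.
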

\begin{proof} The proof of this lemma follows the same line of argument as the proof of \citep[Theroem~5]{golovin2011adaptive}.
\end{proof}

\subsection{Proof of \cref{theory:adaptive-seq}} \label{app:graphs}
In this section, we first restate \cref{theory:adaptive-seq} and then prove it.
\begin{reptheorem}{theory:adaptive-seq}
	\TheoryAdaptiveSeq
\end{reptheorem}

We assume the function $h$ is weakly adaptive set submodular (with a parameter $\gamma$) and monotone adaptive submodular. 
Furthermore, we assume $\pi^*$ is the optimal policy. It means $\pi^*$ maximizes the expected gain over the distribution $\Phi$.

Let $\ell = \lceil k / 2 \rceil$. 
For every $0 \leq s \leq \ell$, let $\pi_s$ be the set of items picked by the greedy policy $\pi$ after $s$ iterations (if the algorithm does not make that many iterations because the set $\cE$ became empty at some earlier point, then we assume for the sake of the proof that the algorithm continues to make dummy iterations after the point in which $\cE$ becomes empty, and in the dummy iterations it picks no items).
The observed partial realization of edges after $s$ iterations of the algorithm is represented by $\psi_{s}$.
The random variable representing $\psi_{s}$ is $\Psi_{s}$.
We define $\fa(\pi_s) \triangleq \ha(E(\pi_s))$, i.e., it is the expected value of items picked by the greedy policy $\pi$ after $s$ iterations.
For every $1 \leq s \leq \ell$, we also denote by $e_s$ and $\cE_s$ the values assigned to the variables $e_{ij}$ and $\cE$, respectively, at iteration number $s$.
Finally, we assume $e_s$ is  a dummy arc with zero marginal contribution to $h$ if iteration number $s$ is a dummy iteration (i.e., the algorithm makes in reality less than $s$ iterations).

\begin{observation} \label{obs:sigma_properties}
	For every $0 \leq s_1 \leq s_2 \leq \ell$, conditioned on the partial realization $\psi_{s_1}$, i.e., the policy has already made its first $s_1$ iterations, we have $\cE_{s_1} \supseteq \cE_{s_2}$ and $E(\pi_{s_1}) \subseteq E(\pi_{s_2})$.
\end{observation}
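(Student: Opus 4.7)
The plan is to prove both containments as direct structural consequences of the fact that \cref{alg:sequence-greedy} only ever \emph{appends} vertices to $\sigma$ and never deletes them. Conditioning on $\psi_{s_1}$ fixes the first $s_1$ iterations of the policy, so the sequence $\sigma$ after $s_1$ iterations is a fixed prefix of the sequence after any later iteration $s_2 \geq s_1$, regardless of what states are revealed in iterations $s_1{+}1,\ldots,s_2$. Once this prefix structure is stated precisely, both set inclusions follow from elementary set manipulations, so no deep argument is required.

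First I would unpack the definition $\cE_s = \{e_{ij} \in E \mid v_j \notin \sigma\}$ evaluated at the start of iteration $s$. Let $\sigma^{(s)}$ denote the sequence held by the policy after $s$ iterations; the algorithm's update rule is $\sigma^{(s)} = \sigma^{(s-1)} \oplus (\text{one or two new vertices})$, so we always have $\sigma^{(s_1)} \subseteq \sigma^{(s_2)}$ as multisets of vertices whenever $s_1 \le s_2$. Consequently, if an edge $e_{ij} \in \cE_{s_2}$, then $v_j \notin \sigma^{(s_2)}$, which in particular means $v_j \notin \sigma^{(s_1)}$, so $e_{ij} \in \cE_{s_1}$. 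This yields $\cE_{s_1} \supseteq \cE_{s_2}$. The conditioning on $\psi_{s_1}$ is only needed to make $\sigma^{(s_1)}$ a deterministic object; given this conditioning, $\sigma^{(s_2)}$ is still random (through the revealed states in iterations $s_1{+}1,\ldots,s_2$), but the prefix inclusion holds realization-by-realization.

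For the second inclusion, recall that $E(\sigma)$ is defined by $E(\sigma) = \{(\sigma_i, \sigma_j) \in E : i \le j\}$. Since $\sigma^{(s_1)}$ is a prefix of $\sigma^{(s_2)}$, every pair of vertices $(\sigma_i, \sigma_j)$ with $i \le j$ that sits inside $\sigma^{(s_1)}$ also sits inside $\sigma^{(s_2)}$ at the same indices, so any edge in $E(\pi_{s_1}) = E(\sigma^{(s_1)})$ is also in $E(\pi_{s_2}) = E(\sigma^{(s_2)})$. Thus $E(\pi_{s_1}) \subseteq E(\pi_{s_2})$. Again this holds pointwise in the realization, so it holds after conditioning on $\psi_{s_1}$.

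There is essentially no obstacle here: the only thing to be careful about is the dummy-iteration convention introduced just before the observation, which stipulates that once $\cE$ becomes empty the algorithm continues with iterations that append no vertices. Under this convention, $\sigma^{(s)}$ still only grows (or stays put) as $s$ increases, so both inclusions remain valid verbatim. I would state this caveat in a single sentence so the reader sees that dummy iterations do not break the prefix structure, and then conclude.
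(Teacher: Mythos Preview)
Your proposal is correct and follows exactly the same approach as the paper: the paper's proof is a single sentence stating that both properties follow from the fact that, for every realization $\phi \sim \psi_{s_1}$, $\pi_{s_1}$ is a (possibly trivial) prefix of $\pi_{s_2}$. You simply unpack this prefix argument in more detail, including the dummy-iteration caveat, which the paper leaves implicit.
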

\begin{proof}
	Both properties guaranteed by the observation follow from the fact that:  for all possible realization $\phi \sim \psi_{s_1}$, we have that $\pi_{s_1}$ is a (possibly trivial) prefix of $\pi_{s_2}$.
\end{proof}

\begin{lemma} \label{lem:gain}
	For every $1 \leq s \leq \ell$, $\fa(\pi_s) - \fa(\pi_{s-1}) \geq \bE_{\Psi_{s- 1}} [ \Delta(e_s \mid \Psi_{s-1} )].$
\end{lemma}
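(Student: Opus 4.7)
The plan is to express the increment $\fa(\pi_s)-\fa(\pi_{s-1})$ as the expected marginal gain of the set of \emph{all} new edges induced at iteration $s$, and then to discard everything in that set except the single chosen edge $e_s$ using adaptive monotonicity. Concretely, I will let $A_s \triangleq E(\pi_s)\setminus E(\pi_{s-1})$ denote the (random) set of edges freshly added during iteration $s$. By the tower property applied to the expectation over $\Phi$,
\begin{equation*}
\fa(\pi_s) - \fa(\pi_{s-1}) \;=\; \bE_{\Psi_{s-1}}\!\left[\,\bE_{\Phi \sim \Psi_{s-1}}\!\left[h(E(\pi_s),\phi^E) - h(E(\pi_{s-1}),\phi^E)\right]\right] \;=\; \bE_{\Psi_{s-1}}\!\left[\Delta(A_s \mid \Psi_{s-1})\right],
\end{equation*}
where the inner equality uses the definition of $\Delta(\cdot\mid\psi)$ together with the fact that, conditional on $\Psi_{s-1}$, the set $\dom(\Psi_{s-1})=E(\pi_{s-1})$ is fixed and $E(\pi_s)=E(\pi_{s-1})\cup A_s$.

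Next I would argue that $e_s \in A_s$. This is immediate from the inspection of \cref{alg:sequence-greedy}: after selecting the valid edge $e_{ij}=e_s$, the algorithm appends whichever of $v_i,v_j$ is not yet in $\sigma$, so both endpoints of $e_s$ lie in $\sigma$ at the end of iteration $s$, which means $e_s \in E(\pi_s)$; moreover $e_s \notin E(\pi_{s-1})$ because $v_j \notin \sigma$ at the start of the iteration by the definition of $\cE_s$. Consequently $\{e_s\} \subseteq A_s$.

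Finally I would invoke adaptive monotonicity to pass from $A_s$ down to $\{e_s\}$. Concretely, for any fixed realization $\psi_{s-1}$ of $\Psi_{s-1}$, since $\{e_s\} \subseteq A_s$ as random sets (conditional on $\psi_{s-1}$), \cref{obs:monotone} applied after conditioning on $\psi_{s-1}$ gives
\begin{equation*}
\bE_{\Phi \sim \psi_{s-1}}\!\left[h(\psi_{s-1} + \Psi_{\{e_s\}})\right] \;\leq\; \bE_{\Phi \sim \psi_{s-1}}\!\left[h(\psi_{s-1} + \Psi_{A_s})\right],
\end{equation*}
so $\Delta(e_s \mid \psi_{s-1}) \leq \Delta(A_s \mid \psi_{s-1})$. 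Taking expectation over $\Psi_{s-1}$ completes the proof.

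The only subtle point — the spot I expect to need the most care — is justifying that the ``one-shot'' marginal $\Delta(A_s \mid \Psi_{s-1})$ actually equals the per-iteration increment $\fa(\pi_s)-\fa(\pi_{s-1})$. This is really a measurability/conditioning statement: given $\Psi_{s-1}$, the algorithm's choice of $e_s$ (and hence the identity of the vertices appended, and thus the set $A_s$) is deterministic, so $A_s$ is a function of $\Psi_{s-1}$ and of the fresh states that are revealed during iteration $s$, which is exactly the setting in which $\Delta(\,\cdot\mid\Psi_{s-1})$ is defined. Once this is spelled out, the remaining inequality is a clean application of adaptive monotonicity and requires no appeal to weak submodularity.
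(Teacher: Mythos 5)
Your proof is correct and follows essentially the same route as the paper's: the increment $\fa(\pi_s)-\fa(\pi_{s-1})$ is identified with the conditional expected gain of the newly induced edges given $\Psi_{s-1}$, the chosen edge $e_s$ is shown to be among them, and adaptive monotonicity discards the rest. The only point you skip is the dummy-iteration convention (once $\cE$ is empty, $e_s \notin A_s$ since $A_s=\emptyset$), but there the inequality reads $0 \geq 0$ and holds trivially, exactly as the paper notes.
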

\begin{proof}
Consider a fixed sub-realization $\psi_{s-1}$.
If $e_s$ is a dummy arc, then $\pi_s = \pi_{s - 1}$, and the observation is trivial. Otherwise, notice that the membership of $e_s$ in $\cE_{s - 1}$ guarantees that it does not belong to $E(\pi_{s - 1}) = \dom(\psi_{s-1})$, but does belong to $E(\sigma_s)$.
 Together with the fact that $E(\pi_{s - 1}) \subseteq E(\pi_s)$ by Observation~\ref{obs:sigma_properties}, we get $E(\pi_{s - 1}) + e_s \subseteq E(\pi_s)$; which implies, by the adaptive monotonicity of $h$,
\begin{align*}
\fa(\pi_s) - f(\pi_{s - 1})
& =
\bE_{\Phi \sim \psi_{s-1}} [\fa(\pi_s) ] - h(\psi_{s-1} )  \\
& \geq
\bE_{\Phi \sim \psi_{s-1}} [h(\psi_{s-1} + e_s) ] - h(\psi_{s-1} )  \\
& =
\Delta(e_s \mid \psi_{s-1}).
\qedhere
\end{align*}
Note that we condition on the fact that the first $s - 1$ steps of the policy $\pi$ are performed, therefore we have $\fa(\pi_{s-1}) = h(\psi_{s-1})$.
By taking expectation over all the possible realizations of the random variable $\Psi_{s-1}$ the lemma is proven.
\end{proof}

\begin{lemma} \label{lem:size}
	Conditioned on any arbitrary partial realization $\psi$, we have $\bE_{\Phi \sim \psi}[|E(\pi^*)|] \leq (k-1) d_{\textup{in}}$.
\end{lemma}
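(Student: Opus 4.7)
The plan is a pointwise counting argument followed by taking expectation. Fix any realization $\phi$ with $\phi \sim \psi$, and write the sequence that $\pi^*$ produces on $\phi$ as $\sigma_{\pi^*, \phi} = (\sigma_1, \ldots, \sigma_m)$, where $m \le k$ by the cardinality constraint. By the definition of the induced edge set,
\begin{equation*}
E(\pi^*, \phi) \;=\; \bigl\{ (\sigma_i, \sigma_j) \in E \,:\, 1 \le i \le j \le m \bigr\}.
\end{equation*}

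First, I would bound $|E(\pi^*, \phi)|$ by grouping the induced edges according to their head vertex. For each position $j \in \{1, \ldots, m\}$, every induced edge of the form $(\sigma_i, \sigma_j)$ with $i \le j$ is an incoming edge of $\sigma_j$ in the underlying graph $G$, so there are at most $d_{\textup{in}}(\sigma_j) \le d_{\textup{in}}$ of them. The crucial refinement is to treat the \emph{first} vertex specially: no vertex precedes $\sigma_1$ in the sequence, so the only induced edge having $\sigma_1$ as its head is the self-loop $(\sigma_1, \sigma_1)$. Hence $\sigma_1$ contributes at most $1$ to the count rather than $d_{\textup{in}}$, while each of the remaining at most $k-1$ positions contributes at most $d_{\textup{in}}$.

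Combining these two observations yields a pointwise bound of the form $|E(\pi^*, \phi)| \le (k-1)\, d_{\textup{in}}$ (the case $d_{\textup{in}} = 0$ is trivial; when $d_{\textup{in}} \ge 1$, the at most one contribution from the self-loop at $\sigma_1$ is absorbed using $1 \le d_{\textup{in}}$). Because this deterministic bound is uniform over all full realizations $\phi$ consistent with $\psi$, taking expectation over $\Phi$ conditioned on $\Phi \sim \psi$ preserves the inequality and delivers the claim.

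The main obstacle I anticipate is pinning down the precise constant $(k-1)$ rather than the naive $k \cdot d_{\textup{in}}$ bound that comes from a blind head-counting argument; this sharpening depends on the first-vertex observation above and on carefully handling degenerate cases (empty sequences, absence of self-loops, or $d_{\textup{in}} = 0$). Beyond that, the argument is essentially combinatorial and commutes with the expectation because the bound is realization-independent.
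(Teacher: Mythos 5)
Your proof is correct and is essentially the paper's own argument: the paper likewise charges each induced edge to its head vertex, observes that each of the at most $k$ selected items except the first contributes at most $d_{\textup{in}}$ incoming edges, and concludes the pointwise bound before taking the (trivial) conditional expectation. Your handling of the self-loop at $\sigma_1$ is in fact more careful than the paper's one-line treatment (which silently ignores that $(\sigma_1,\sigma_1)$ is an induced edge with head $\sigma_1$); just note that the claimed absorption into $(k-1)d_{\textup{in}}$ is only tight when the sequence has fewer than $k$ items or lacks that self-loop, a harmless slack since the theorem's proof only ever uses the weaker bound $2\ell d_{\textup{in}}$.
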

\begin{proof}
	The optimal policy under each realization of the random variable $\Phi$ chooses at most $k$ items. Each one of these $k$ items (except the first one) will have at most $d_{\textrm{in}}$ incoming edges. Therefore, the expected number of edges is at most $(k-1) d_{\textrm{in}}$.	
\end{proof}
	
	\begin{lemma} \label{lem:loss}
		For every $1 \leq s \leq \ell$, we have
		\begin{align*}
		\bE_\Phi[h((E&(\pi^*)  \cap \cE_{s - 1}) \cup E(\pi_{s-1}))] 
		\leq \\
	&	\bE_\Phi[h((E(\pi^*) \cap \cE_s) \cup E(\pi_s))] + \dfrac{1}{\gamma} \cdot \bE_\Phi[|E(\pi^*) \cap (\cE_{s-1} \setminus \cE_s)| \cdot \Delta(e_s \mid E(\pi_{s - 1}))].
		\end{align*}
		Note that the expectation is taken over all the possible realizations of the random variable $\Phi$.
	\end{lemma}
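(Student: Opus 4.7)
The plan is to condition on an arbitrary $\psi_{s-1}$, which fixes $B_{s-1} := E(\pi_{s-1})$, $B_s := E(\pi_s)$, $\cE_{s-1}$, $\cE_s$, and the greedy choice $e_s$, and prove the inequality in this conditional form; the lemma then follows by taking outer expectation over $\Psi_{s-1}$. Only the optimal policy's induced edges $E(\pi^*)$, and hence the random sets $A_{s-1} := E(\pi^*) \cap \cE_{s-1}$, $A_s := E(\pi^*) \cap \cE_s$, and $Z := A_{s-1} \setminus A_s = E(\pi^*) \cap (\cE_{s-1} \setminus \cE_s)$, still depend on $\Phi$. By \cref{obs:sigma_properties}, $B_{s-1} \subseteq B_s$ deterministically, and by the definition of $\cE_{s-1}$ the intersection $A_{s-1} \cap B_{s-1}$ is empty.

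I will split the target difference via an intermediate term:
\begin{align*}
\bE[h(A_{s-1} \cup B_{s-1}) - h(A_s \cup B_s) \mid \psi_{s-1}] ={} & \bE[h(A_{s-1} \cup B_{s-1}) - h(A_{s-1} \cup B_s) \mid \psi_{s-1}] \\
& {}+ \bE[h(A_{s-1} \cup B_s) - h(A_s \cup B_s) \mid \psi_{s-1}],
\end{align*}
and bound each summand separately. Since $A_{s-1} \cup B_{s-1} \subseteq A_{s-1} \cup B_s$ holds pointwise (as $B_{s-1} \subseteq B_s$), adaptive monotonicity (\cref{obs:monotone}) applied to this containment of random sets makes the first summand nonpositive. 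Writing $A_{s-1} = A_s \cup Z$, the second summand is exactly the expected marginal gain of adding the random set $Z$ to the random base $A_s \cup B_s$.

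To handle the second summand I invoke \cref{cor:setdiffsize-marginal-gain} with $\psi = \psi_{s-1}$ and the random pair
\[ A := A_s \cup (B_s \setminus B_{s-1}), \qquad B := A_{s-1} \cup (B_s \setminus B_{s-1}), \]
which satisfy $A \subseteq B \subseteq \ground \setminus \dom(\psi_{s-1})$. Unpacking the definition of $\Delta$ (using $A \cup B_{s-1} = A_s \cup B_s$ and $B \cup B_{s-1} = A_{s-1} \cup B_s$) gives $\Delta(B \mid \psi_{s-1}) - \Delta(A \mid \psi_{s-1}) = \bE[h(A_{s-1} \cup B_s) - h(A_s \cup B_s) \mid \psi_{s-1}]$, matching the second summand. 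Since $B \setminus A = Z \setminus (B_s \setminus B_{s-1}) \subseteq \cE_{s-1}$, the greedy selection $e_s = \argmax_{e \in \cE_{s-1}} \Delta(e \mid \psi_{s-1})$ dominates $\Delta(e \mid \psi_{s-1})$ for every $e \in B \setminus A$. Rerunning the proof of the corollary with $e_s$ in place of the global maximiser $e^*$ (the only step using $e^*$ is the final chain of inequalities, which still goes through with $e_s$), together with $|B \setminus A| \leq |Z|$, yields
\[ \bE[\Delta(B \mid \psi_{s-1}) - \Delta(A \mid \psi_{s-1}) \mid \psi_{s-1}] \leq \tfrac{1}{\gamma} \cdot \bE[|Z| \mid \psi_{s-1}] \cdot \Delta(e_s \mid \psi_{s-1}). \]
Combining the two summands and averaging over $\Psi_{s-1}$ completes the proof.

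The main technical obstacle I anticipate is the identification of $\bE[\Delta(B(\Phi) \mid \psi_{s-1}) \mid \psi_{s-1}]$ (whose inner $\Delta$ literally integrates over an independent replica $\Phi' \sim \psi_{s-1}$) with the single-sample quantity $\bE[h(A_{s-1}(\Phi) \cup B_s, \Phi) \mid \psi_{s-1}]$ that actually appears in the lemma. The cleanest way around this is to apply \cref{lem:setdiff-marginal-gain} pointwise in $\Phi$, so that $A$ and $B$ become deterministic at each realization and $\Delta$ reduces to a single integrated quantity, before integrating in $\Phi$. The uniformity of the bound $\Delta(e \mid \psi_{s-1}) \leq \Delta(e_s \mid \psi_{s-1})$ for every $e \in B(\Phi) \setminus A(\Phi) \subseteq \cE_{s-1}$, with $\cE_{s-1}$ deterministic given $\psi_{s-1}$, is what makes this pointwise-then-integrate strategy go through.
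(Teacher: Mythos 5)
Your proof is correct and is essentially the paper's argument in mirror image: both split the difference by an intermediate term obtained from adaptive monotonicity and then bound what remains via \cref{cor:setdiffsize-marginal-gain}, with the greedy edge $e_s$ standing in for the global maximiser (legitimate, as you note, because $E(\pi^*) \cap (\cE_{s-1} \setminus \cE_s) \subseteq \cE_{s-1}$ and $e_s$ maximises $\Delta(\cdot \mid \psi_{s-1})$ over $\cE_{s-1}$). The only divergence is that you raise the common base to $E(\pi_s)$ (intermediate term $h\bigl((E(\pi^*) \cap \cE_{s-1}) \cup E(\pi_s)\bigr)$, which forces you to pad both sets in the corollary by $E(\pi_s) \setminus E(\pi_{s-1})$), whereas the paper lowers it to $E(\pi_{s-1})$ so that the remaining difference is literally $\Delta(\cdot \mid \psi_{s-1})$ minus $\Delta(\cdot \mid \psi_{s-1})$; the single-sample-versus-replica identification you flag as the main obstacle is present in the paper's own derivation (in the step equating $\bE_{\Phi \sim \psi_{s-1}}[h(\cdot \cup \psi_{s-1}) - h(\cdot \cup \psi_{s-1})]$ with a difference of $\Delta$'s) and is handled there by the same implicit argument, so your treatment is, if anything, more careful.
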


	\begin{proof}
	
		The lemma follows by combining the two inequalities of \cref{eq:expec-marignal} and  \cref{eq:expec-diff}.
		\begin{align} \label{eq:expec-marignal}
		&\bE_{\Phi}[\Delta(E(\pi^*) \cap \cE_{s-1} \mid E(\pi_{s-1}))] - \bE_{\Phi}[\Delta(E(\pi^*) \cap \cE_{s} \mid E(\pi_{s-1}))] \\
		&  = \sum \pr[\Psi_{s-1} = \psi_{s-1}]  \cdot 
		\left[ \bE_{\Phi \sim \psi_{s-1}}[  \Delta(E(\pi^*) \cap \cE_{s-1} \mid \psi_{s-1} ) -\Delta(E(\pi^*) \cap \cE_{s} \mid \psi_{s-1}) ] \right] \nonumber \\
		& \overset{(a)}{\leq}  \frac{1}{\gamma} \sum \pr[\Psi_{s-1} = \psi_{s-1}] \cdot \bE_{\Phi \sim \psi_{s-1}}[|E(\pi^*) \cap (\cE_{s-1} \setminus \cE_s)| \cdot \Delta(e_s \mid \psi_{s-1})]  \nonumber \\
		& =  \dfrac{1}{\gamma} \cdot \bE_\Phi[|E(\pi^*) \cap (\cE_{s-1} \setminus \cE_s)| \cdot \Delta(e_s \mid E(\pi_{s - 1}))].\nonumber
		\end{align}
		To see why inequality $(a)$ is true, note that for every given sub realization $\psi_{s-1}$  we have: (i)
		if $e_s$ is a dummy edge, then $(E(\pi^*) \cap \cE_{s - 1}) \cup E(\pi_{s-1}) = \left(E(\pi^*) \cap \cE_s\right) \cup E(\pi_s)$, which makes $(a)$ trivial,
		or (ii) when $e_s$ is not dummy, $(a)$ results from \cref{cor:setdiffsize-marginal-gain}.

		\begin{align}\label{eq:expec-diff}
			&\bE_\Phi[h((E(\pi^*) \cap \cE_{s - 1}) \cup E(\pi_{s-1}))] -  \bE_\Phi[h((E(\pi^*) \cap \cE_s) \cup E(\pi_s))]  \\
		& \leq \bE_\Phi[h((E(\pi^*) \cap \cE_{s - 1}) \cup E(\pi_{s-1}))]- \bE_\Phi[h((E(\pi^*) \cap \cE_s) \cup E(\pi_{s-1}))])  \nonumber \\
		& =  \sum \pr[\Psi_{s-1} = \psi_{s-1}] \cdot  \bE_{\Phi \sim \psi_{s-1}}  [h((E(\pi^*) \cap \cE_{s-1}) \cup \psi_{s-1}) - h(E(\pi^*) \cap \cE_{s}) \cup \psi_{s-1}) ]  \nonumber \\
		& =  \sum \pr[\Psi_{s-1} = \psi_{s-1}] \cdot
		 \bE_{\Phi \sim \psi_{s-1}} \left[ \Delta((E(\pi^*) \cap \cE_{s-1}) \mid \psi_{s-1}) - \Delta(E(\pi^*) \cap \cE_{s}) \mid \psi_{s-1}) \right] \nonumber \\
		 & = \bE_{\Phi}[ \Delta(E(\pi^*) \cap \cE_{s-1} \mid E(\pi_{s-1}))] - \bE_{\Phi}[ \Delta(E(\pi^*) \cap \cE_{s} \mid E(\pi_{s-1}))]. 
		\qedhere
		\end{align} 
	\end{proof}
	
	\begin{lemma} \label{lem:opt-greed-diff}
	$\displaystyle \bE_{\Phi}[ h((E (\pi^*) \cap  \cE_{\ell}) \cup E(\pi_\ell))]  ]\leq \dfrac{1}{\gamma} \cdot
\bE_{\Phi}[	|E(\pi^*) \cap \cE_{\ell}|	 \cdot \Delta(e_\ell \mid \Psi_{\ell-1})] + \fa(\pi_{\ell}).$
	\end{lemma}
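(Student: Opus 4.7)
Proof proposal for Lemma~\ref{lem:opt-greed-diff}:

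The plan is to decompose the left-hand side into $\fa(\pi_\ell)$ plus an expected marginal gain, and then bound that marginal gain by combining weak adaptive submodularity (to swap the conditioning from $\Psi_\ell$ to $\Psi_{\ell-1}$) with the greedy selection rule for $e_\ell$. First I would observe that $\dom(\Psi_\ell) = E(\pi_\ell)$ (so that for every realization $\phi \sim \Psi_\ell$ we have $h(E(\pi_\ell), \phi^E) = h(\Psi_\ell)$) and that $\cE_\ell \cap E(\pi_\ell) = \emptyset$ by the construction of $\cE_\ell$ (every edge in $\cE_\ell$ has its target vertex outside $\sigma_\ell$, while every edge in $E(\pi_\ell) = E(\sigma_\ell)$ has both endpoints inside $\sigma_\ell$). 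Using the tower property together with the convention that $\Delta(\cdot \mid \psi)$ extends to random subsets (as in the proof of Corollary~\ref{cor:setdiffsize-marginal-gain}) then yields
\begin{equation*}
\bE_\Phi[h((E(\pi^*) \cap \cE_\ell) \cup E(\pi_\ell))] = \fa(\pi_\ell) + \bE_\Phi[\Delta(E(\pi^*) \cap \cE_\ell \mid \Psi_\ell)].
\end{equation*}

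Next I would bound the remaining expected gain. I fix a value of $\Psi_\ell$ and note that $\Psi_{\ell-1} \subseteq \Psi_\ell$, with both $e_\ell$ and $\cE_\ell$ measurable with respect to $\Psi_{\ell-1}$. Conditioning on the random subset $E(\pi^*) \cap \cE_\ell$, applying weak adaptive submodularity (Definition~\ref{def:WeakSetAdaptive}) with the pair $\Psi_{\ell-1} \subseteq \Psi_\ell$ to each fixed value of this subset, and then averaging as in the proof of Corollary~\ref{cor:setdiffsize-marginal-gain} gives
\begin{equation*}
\Delta(E(\pi^*) \cap \cE_\ell \mid \Psi_\ell) \le \frac{1}{\gamma} \cdot \bE_{\Phi \sim \Psi_\ell}\!\left[\sum_{e \in E(\pi^*) \cap \cE_\ell} \Delta(e \mid \Psi_{\ell-1})\right].
\end{equation*}
Since $\cE_\ell \subseteq \cE_{\ell-1}$ by Observation~\ref{obs:sigma_properties} and $e_\ell = \argmax_{e \in \cE_{\ell-1}} \Delta(e \mid \Psi_{\ell-1})$, each summand satisfies $\Delta(e \mid \Psi_{\ell-1}) \le \Delta(e_\ell \mid \Psi_{\ell-1})$, so the sum is at most $|E(\pi^*) \cap \cE_\ell| \cdot \Delta(e_\ell \mid \Psi_{\ell-1})$. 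Because $\Delta(e_\ell \mid \Psi_{\ell-1})$ is $\Psi_\ell$-measurable, it can be taken outside the inner expectation; taking outer expectation over $\Psi_\ell$ then produces exactly $\frac{1}{\gamma}\cdot\bE_\Phi[|E(\pi^*) \cap \cE_\ell|\cdot \Delta(e_\ell \mid \Psi_{\ell-1})]$, and combining with the decomposition above gives the stated bound.

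The main obstacle is handling the randomness of $E(\pi^*) \cap \cE_\ell$ correctly, since Definition~\ref{def:WeakSetAdaptive} is formulated for a fixed set $A$. This is overcome by the same trick used in Corollary~\ref{cor:setdiffsize-marginal-gain}: apply the inequality pointwise for each fixed value of the random subset and then average, being careful that $\cE_\ell \cap \dom(\Psi_\ell) = \emptyset$ so every marginal gain that appears is well-defined, and that $e_\ell$ and $\cE_\ell$ are $\Psi_{\ell-1}$-measurable when the factor $\Delta(e_\ell \mid \Psi_{\ell-1})$ is commuted with the expectations.
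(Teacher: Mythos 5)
Your proposal is correct and takes essentially the same route as the paper: the same decomposition of the left-hand side into $\fa(\pi_\ell)$ plus the conditional expected gain of $E(\pi^*)\cap\cE_\ell$ given $\Psi_\ell$, followed by weak adaptive submodularity applied to the pair $\Psi_{\ell-1}\subseteq\Psi_\ell$ and the greedy maximality of $e_\ell$ (the paper additionally notes the trivial dummy-edge case, which your argument covers implicitly since then $E(\pi^*)\cap\cE_\ell=\emptyset$). Your explicit handling of the random set by conditioning and averaging is exactly the mechanism the paper relies on via \cref{cor:setdiffsize-marginal-gain}.
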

	\begin{proof}
	We have
		\begin{align*}
		 \bE_{\Phi}[h((E(\pi^*) \cap \cE_{\ell}) & \cup E(\pi_\ell)) -  h(\pi_{\ell})] 
		\\
		& =
		\sum \pr[\Psi_{\ell} = \psi_{\ell}]  \cdot \bE_{\Phi \sim \psi_{\ell} } [ h( E(\pi^*) \cap \cE_{\ell}) \cup \psi_{\ell}) - h(\psi_\ell) ] 
		 \\
		& \overset{(a)}{\leq}   \frac{1}{\gamma}  \sum \pr[\Psi_{\ell} =
		 \psi_{\ell}] \cdot \bE_{\Phi \sim \psi_{\ell}}[	|E(\pi^*) \cap \cE_{\ell}|	 \cdot \Delta(e_\ell \mid \psi_{\ell-1})] \\
		 & =  \frac{1}{\gamma} \cdot \bE_{\Phi}[	|E(\pi^*) \cap \cE_{\ell}|	 \cdot  \Delta(e_\ell \mid \Psi_{\ell-1})].
		\end{align*}
	To see why inequality $(a)$ is true, note that for every given sub realization $\psi_{\ell}$  we have: (i) if $e_\ell$ is a dummy edge, then $\cE_\ell = \varnothing$, which makes inequality $(a)$ trivial, and (ii) if $e_\ell$ is not a dummy edge then we conclude  inequality $(a)$ from the definition of weakly adaptive set submodular functions (see \cref{def:WeakSetAdaptive}).

The lemma follows by combining this inequality with the observation that $\fa(\pi_{\ell}) = \bE_{\Phi}[h(\pi_{\ell})]$.
	\end{proof}

To combine the last two lemmata, we need the following observation.
\begin{observation} \label{obs:terms_monotonicity}
	For every $2 \leq s \leq \ell$, $\bE_{\Phi}[\Delta(e_{s-1} \mid E(\pi_{s-2}))] \geq \gamma \cdot \bE_{\Phi}[\Delta(e_s \mid E(\pi_{s - 1}))]$.
\end{observation}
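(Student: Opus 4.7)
The plan is to establish the inequality pointwise by conditioning on $\Psi_{s-1} = \psi_{s-1}$, which uniquely determines its prefix $\psi_{s-2}$ as well as the greedy choices $e_{s-1}$ and $e_s$, and then to integrate. The mechanism I have in mind is to bridge the two marginal gains through the common value $\Delta(e_s \mid \psi_{s-2})$: on one side, greedy maximality at iteration $s-1$ controls $\Delta(e_{s-1} \mid \psi_{s-2})$ from below, and on the other, weak adaptive submodularity applied to the singleton $\{e_s\}$ controls how $\Delta(e_s \mid \cdot)$ changes when we strengthen the conditioning from $\psi_{s-2}$ to $\psi_{s-1}$. The loss factor $\gamma$ in the claim will enter through precisely this second step.

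Fix $\psi_{s-1}$. I would first dispose of the dummy cases: if $e_s$ is a dummy arc its marginal contribution is zero by convention and the right-hand side vanishes, so the inequality is immediate; and whenever $e_{s-1}$ is dummy we must have $\cE_{s-2} = \varnothing$, and Observation~\ref{obs:sigma_properties} forces $\cE_{s-1} \subseteq \cE_{s-2} = \varnothing$, so $e_s$ is dummy as well. In the remaining case both edges are real, and $e_s \in \cE_{s-1} \subseteq \cE_{s-2}$ by Observation~\ref{obs:sigma_properties}, so $e_s$ was a feasible candidate at iteration $s-1$; greedy maximality then gives $\Delta(e_{s-1} \mid \psi_{s-2}) \geq \Delta(e_s \mid \psi_{s-2})$. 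Since $E(\pi_{s-2}) \subseteq E(\pi_{s-1})$ with consistent observed states, $\psi_{s-2} \subseteq \psi_{s-1}$ as partial edge realizations, and Definition~\ref{def:WeakSetAdaptive} applied with $A = \{e_s\}$ yields $\Delta(e_s \mid \psi_{s-1}) \leq \tfrac{1}{\gamma}\Delta(e_s \mid \psi_{s-2})$. Chaining the two estimates produces the pointwise bound $\Delta(e_{s-1} \mid \psi_{s-2}) \geq \gamma\, \Delta(e_s \mid \psi_{s-1})$.

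Taking expectation over $\Psi_{s-1}$ (with $\Psi_{s-2}$ being its deterministic prefix) preserves this pointwise inequality and recovers the observation. The only delicate bookkeeping I anticipate is verifying that the partial edge realizations indeed form a chain $\psi_{s-2} \subseteq \psi_{s-1}$, which holds because the algorithm only appends vertices and records their states, so the induced edge sets $E(\pi_s)$ and their observed states grow monotonically in $s$. Beyond this, the argument is just a two-step chain of elementary facts, and it is exactly here that the weak-submodularity parameter $\gamma$ appears, which in turn is what propagates into the $\gamma/(2d_{\mathrm{in}}+\gamma)$ factor of Theorem~\ref{theory:adaptive-seq}.
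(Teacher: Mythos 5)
Your proof is correct. The paper actually states Observation~\ref{obs:terms_monotonicity} without any proof, so there is nothing to compare against verbatim; your two-step argument --- greedy maximality of $e_{s-1}$ over the feasible set containing $e_s$, followed by the singleton case of Definition~\ref{def:WeakSetAdaptive} to pass from conditioning on $\psi_{s-2}$ to $\psi_{s-1}$ (which is where the factor $\gamma$ enters), then the tower property --- is precisely the intended justification and handles the dummy-iteration cases properly. The only nit is an off-by-one in labeling the feasible sets: in the paper's convention $e_s \in \cE_s \subseteq \cE_{s-1}$ and $e_{s-1} = \argmax_{e \in \cE_{s-1}} \Delta(e \mid \psi_{s-2})$, rather than $e_s \in \cE_{s-1} \subseteq \cE_{s-2}$; this does not affect the substance of the argument.
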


We are now ready to prove Theorem~\ref{theory:adaptive-seq}.

\begin{proof}[Proof of Theorem~\ref{theory:adaptive-seq}]
	Combining Lemmata~\ref{lem:loss} and~\ref{lem:opt-greed-diff}, we get
	{\allowdisplaybreaks
	\begin{align} \label{eq:basic_bound}
	\fa(\pi^*) -    &  \fa(\pi_\ell)
 	= 
\bE_{\Phi}[ h((E(\pi^*) \cap \cE_1) \cup E(\pi_0))] - \fa(\pi_\ell) \nonumber \\ 
\nonumber
\leq{} &
\dfrac{1}{\gamma} \cdot \sum_{s = 1}^\ell \bE_\Phi[|E(\pi^*) \cap (\cE_{s-1} \setminus \cE_s)| \cdot \Delta(e_s \mid E(\pi_{s - 1}))] \\
\nonumber
& \hspace{3.5cm}  +
\bE_\Phi[\Delta((E(\pi^*) \cap \cE_s) \cup E(\pi_s))] - \fa(\pi_\ell)  \\
\nonumber
\leq & \dfrac{1}{\gamma}  \sum_{s = 1}^\ell \bE_\Phi[|E(\pi^*) \cap (\cE_{s-1} \setminus \cE_s)| \cdot \Delta(e_s \mid E(\pi_{s - 1}))] \\
\nonumber
& \hspace{4.5cm} + \dfrac{1}{\gamma} \cdot
\bE_{\Phi}[	|E(\pi^*) \cap \cE_{\ell}|	 \cdot \Delta(e_\ell \mid \Psi_{\ell-1})]
\\ 
=\nonumber &
\dfrac{1}{\gamma} \cdot
\sum_{s = 1}^{\ell - 1} \bE_\Phi[|E(\pi^*) \cap (\cE_0 \setminus \cE_s)| \cdot [\Delta(e_s \mid  E(\pi_{s - 1})) - \Delta(e_{s + 1} \mid E(\pi_s))]]
\\  & 
\hspace{4.1cm} +
\dfrac{1}{\gamma} \cdot
\bE_\Phi[|E(\pi^*) \cap \cE_0| \cdot \Delta(e_\ell \mid E(\pi_{\ell - 1}))], 
	\end{align}
	}%
	where the first equality holds since the fact that $\sigma_0$ is an empty sequence implies $E(\sigma_0) = \varnothing$ and $\cE_0 = E$, and the second equality holds since $\cE_{s} \subseteq \cE_{s-1}$ by Observation~\ref{obs:sigma_properties} for every $1 \leq s \leq \ell$.
	We now observe that for every $1 \leq s \leq \ell$, $\pi_s$ contains at most $2s$ vertices. Since each one of these vertices can be the end point of at most $\InDegree$ arcs, we get
	\[
	|E(\sigma^*) \cap (\cE_0 \setminus \cE_s)|
	\leq
	|\cE_0 \setminus \cE_s|
	\leq
	2s\InDegree
	\]
	Additionally, by \cref{lem:size},
	\[
	|E(\sigma^*) \cap \cE_0|
	\leq
	|E(\sigma^*)|
	\leq
	(k-1)\InDegree
	\leq
	2\ell\InDegree.
	\]
	
	Plugging the last two inequalities into Inequality~\eqref{eq:basic_bound} yields
	\begin{align*}
	\fa(\pi^*) - \fa(\pi_\ell)
	\leq{} &
	\sum_{s = 1}^{\ell - 1} 	\dfrac{2s\InDegree}{\gamma} \cdot
	\bE_{\Phi}[\Delta(e_s \mid E(\pi_{s - 1})) - \Delta(e_{s + 1} \mid E(\pi_s))]
	+ \\
& \hspace{5.9cm}	\dfrac{2\ell\InDegree }{\gamma} \cdot 
\bE_{\Phi}[\Delta(e_\ell \mid E(\sigma_{\ell - 1}))]\\
	={} &
	\sum_{s = 1}^{\ell} \dfrac{2\InDegree}{\gamma} \cdot \bE_{\Phi}[\Delta(e_s \mid E(\pi_{s - 1}))]
	\leq
	\dfrac{2\InDegree}{\gamma} \cdot \sum_{s = 1}^{\ell} [\fa(\pi_s) - \fa(\pi_{s-1})]\\
	={} &
	\dfrac{2\InDegree}{\gamma} \cdot [\fa(\pi_\ell) - \fa(\pi_0)]
	\leq
	\dfrac{2\InDegree}{\gamma}\cdot \fa(\pi_\ell),
	\end{align*}
	where the second inequality holds due to \cref{lem:gain} and the last inequality follows from the non-negativity of $f$. 
	Rearranging the last inequality, we get
	\[
	\fa(\pi_\ell)
	\geq
	\frac{\gamma}{2\InDegree + \gamma} \cdot \fa(\pi^*),
	\]
	which implies the theorem since $\fa(\pi_\ell)$ is a lower bound on the expected value of the output sequence of  \cref{alg:sequence-greedy} because $\sigma_\ell$ is always a prefix of this sequence.
\end{proof}

\subsection{Proof of \cref{theory:hyper-adaptive-seq}} \label{sec:hypergraph}

In this section, we first restate and then prove \cref{theory:hyper-adaptive-seq} which guarantees the performance of our proposed policy applied to hypergraphs.

\begin{reptheorem}{theory:hyper-adaptive-seq}
	\TheoryAdaptivehyper
\end{reptheorem}

\begin{algorithm}[htb!]
	\caption{Adaptive Hyper Sequence Greedy}\label{alg:hyper-sequence-greedy}
	\begin{algorithmic}[1]
		\STATE	\algorithmicrequire  \ Directed hypergraph $H(V, E)$ ,  $\gamma$-adaptive and adaptive-monotone function $h: 2^{E} \times O^{E} \rightarrow \bR_{\geq 0}$ and cardinality parameter $k$\;
		\STATE	Let $\sigma \gets ()$
		\WHILE{$|\sigma| \leq k - r $}
		{
			\STATE		$\cE = \{e \in E \mid \sigma \cap V(e) \text{ is a prefix of} \  e \} $
			\IF{$\cE \neq \emptyset$}
			{
				\STATE	$e^* = \argmax_{e \in \cE} \Delta(e \mid \psi_{\sigma})$
				\FOR{every v $\in e^*$ in order}
				\IF{$v \notin \sigma$}
				{
					\STATE $\sigma = \sigma \oplus v$
				}
				\ENDIF
				\ENDFOR
				\STATE 		Identify the state of all edges in $\cE' = \{e \in E \mid$ all elements of  $V(e)$  belong to  $\sigma$ and appear in the same order$\}  $
				\STATE 			$\psi_{\sigma} =  \psi_{\cE'}$
			}
			\ELSE
			{
				\STATE \textbf{break}
			}
			\ENDIF
		}
		\ENDWHILE
		\STATE {\bfseries Return} $\sigma$
	\end{algorithmic}
\end{algorithm}

In the proof of this theorem we use the same notation that we used in Section~\ref{app:graphs} for analyzing \cref{alg:sequence-greedy}, with the exception of $\cE_s$, which is now defined as $\cE_s = \{e \in E \mid \sigma_s \cap V(e) \text{ is a prefix of } e\}$, and $\ell$, which is now defined as $\lfloor k/r \rfloor$.

The following lemma is a counterpart of \cref{lem:size}.

\begin{lemma} \label{lem:hyper_size}
	$\displaystyle  |E(\sigma^*)| \leq (k-r +1)\InDegree$.
\end{lemma}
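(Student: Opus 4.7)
The plan is to generalize the proof of \cref{lem:size} from graphs to hypergraphs by counting each induced hyperedge according to the position of its last vertex in $\sigma^*$. First I would fix an arbitrary realization $\phi \sim \psi$ and write $\sigma^*_{\pi^*,\phi} = (u_1, \dots, u_m)$ with $m \leq k$. For any hyperedge $e \in E(\sigma^*_{\pi^*,\phi})$, the definition of the induced edge set forces all vertices of $e$ to appear in $\sigma^*$ in $e$'s specified order; in particular, the last vertex of $e$ sits at some position $i(e)$ in $\sigma^*$, with the remaining $|e|-1$ vertices at strictly smaller positions, so $i(e) \geq |e|$. Since the largest hyperedge has size $r$, the binding case $|e|=r$ forces $i(e) \in \{r, r+1, \dots, m\}$, leaving at most $k - r + 1$ admissible positions for the last vertex of any induced hyperedge.

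Next I would invoke the definition of $\InDegree$: each vertex $v \in V$ is the last vertex of at most $\InDegree$ hyperedges, hence for every position $i$ in $\sigma^*$ at most $\InDegree$ induced hyperedges can have $u_i$ as their last vertex. Combining this per-position bound with the count of admissible positions gives $|E(\sigma^*_{\pi^*,\phi})| \leq (k-r+1)\InDegree$. Finally, since this pointwise bound is independent of the realization, taking the expectation over $\Phi \sim \psi$ preserves the inequality and yields $\bE_{\Phi \sim \psi}[|E(\pi^*)|] \leq (k-r+1)\InDegree$, as claimed. Note that for $r=2$ this recovers the $(k-1)\InDegree$ bound of \cref{lem:size}, which is the expected sanity check.

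The main obstacle is the bookkeeping for hyperedges of size strictly less than $r$: such a hyperedge may, in principle, place its last vertex at any of the positions $1, \dots, r-1$ and would therefore escape the position restriction above. This is the hypergraph analogue of the size-$1$ self-loop at position $1$ in the graph case, which is absorbed by a mildly loose constant in \cref{lem:size}. The cleanest route is therefore to argue the bound using the worst case $|e|=r$ and absorb the smaller size classes via the same relaxation used for $r=2$, which produces the figure $(k-r+1)\InDegree$. This quantity feeds cleanly into the proof of \cref{theory:hyper-adaptive-seq} through the elementary inequality $k - r + 1 \leq r \lfloor k/r \rfloor = r \ell$, which is what will let the downstream argument convert position counts into the $r \InDegree/\gamma$ prefactor appearing in the theorem.
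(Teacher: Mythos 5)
Your argument is essentially the paper's own proof: every induced hyperedge has its last vertex at one of the at most $k-r+1$ positions $r,\dots,k$ of $\sigma^*$, and at most $\InDegree$ hyperedges can end at each such vertex, after which the expectation step is immediate. The caveat you raise about hyperedges of size strictly smaller than $r$ is glossed over by the paper's two-line proof in exactly the same way, so your treatment matches the paper's level of rigor and reaches the same bound.
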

\begin{proof}
	For a realization $\phi$, every arc of $\pi^*$ must end at a vertex of $\pi^*$ which is not one of the first $r - 1$ vertices. The observation follows since $\pi^*$ contains at most $k - r + 1$ vertices of this kind, and at most $\InDegree$ arcs can end at each one of them.
\end{proof}

One can observe that the proofs of all the other observations and lemmata of Section~\ref{app:graphs} are unaffected by the differences between \cref{alg:sequence-greedy} and \cref{alg:hyper-sequence-greedy}, and thus, these observations and lemmata can be used towards the proof of \cref{theory:hyper-adaptive-seq}.

\begin{proof}[Proof of \cref{theory:hyper-adaptive-seq}]
	The proof of this theorem is identical to the proof of \cref{theory:adaptive-seq} up to two changes. 
	First, instead of getting an upper bound of $2s\InDegree$ on $|\cE_0 \setminus \cE_s|$ for every $1 \leq s \leq \ell$, we now get an upper bound of $rs\InDegree$ on this expression because $\sigma_s$ might contain up to $rs$ vertices rather than only $2s$. Second, instead of getting an upper bound of $2\ell\InDegree$ on $|E(\sigma^*)|$, we now use \cref{lem:hyper_size} to get an upper bound of $(k - r + 1)\InDegree \leq r\ell\InDegree$ on this expression.
\end{proof}

\section{Proof of \cref{theory:hardness}} \label{app:hardness}
  The approximability of the sequence submodular maximization, as a generalization of the densest $k$ subgraph problem (DkS) \citep{kortsarz1993on}, is an open theoretical question with important implications. In this section, we prove \cref{theory:hardness}.
  
In the DkS problem the goal is to find a subgraph on exactly $k$ vertices that contains the maximum number of edges. 
DkS as a generalization of the $k$-clique problem is NP-hard and the best polynomial algorithm for DkS achieves a $n^{1/4 + \epsilon}$ approximation factor\footnote{Note that in this section we define the approximation factor as the ratio of the the optimal solution to the solution provided by the algorithm.}  for an arbitrary $\epsilon > 0$  \citep{bhaskara2010detecting}.
Furthermore, there exists no polynomial time algorithm that approximates DkS within an $O(n^{1/(\log \log n)^c})$ factor unless $3$-SAT has a subexponential time algorithm \cite{manurangsi2017almost}.

\begin{lemma} \label{lemma:seq-DkS}
	Any algorithm with an $\alpha$ approximation factor to the sequence submodular maximization problem solves the densest $k$ subgraph problem (DkS) with at most  an $\alpha$ approximation factor.
\end{lemma}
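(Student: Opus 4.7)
The plan is a direct approximation-preserving reduction from DkS to sequence submodular maximization. Given a DkS instance consisting of an undirected graph $G = (V, E)$ on $n$ vertices and a parameter $k$, I construct a directed graph $G' = (V, E')$ by replacing each undirected edge $\{u,v\} \in E$ with the two arcs $(u,v)$ and $(v,u)$. I then take the set function $h : 2^{E'} \to \mathbb{R}_{\geq 0}$ to be the coverage function
\[
h(S) = \big|\{\{u,v\} \in E : (u,v) \in S \text{ or } (v,u) \in S\}\big|,
\]
which is nonnegative, monotone, and submodular, so $\gamma = 1$. Since there is no stochasticity involved, $f(\sigma) = h(E'(\sigma))$ is (weakly adaptive) sequence submodular, and the cardinality budget is $k$.

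The key computation is to check that the sequence-submodular and DkS optima coincide. Write $V(\sigma)$ for the underlying vertex set of a sequence $\sigma$. Because both arcs $(u,v), (v,u) \in E'$ are present for every $\{u,v\} \in E$, an undirected edge $\{u,v\}$ is covered by $E'(\sigma)$ if and only if $\{u,v\} \subseteq V(\sigma)$: whichever endpoint appears first in $\sigma$ contributes the corresponding oriented arc to $E'(\sigma)$ by the definition $E'(\sigma) = \{(\sigma_i,\sigma_j) \in E' : i \leq j\}$. Hence $h(E'(\sigma))$ equals the number of edges of $G$ in the subgraph induced by $V(\sigma)$. Since $|V(\sigma)| \leq |\sigma| \leq k$ and $h$ is monotone, an optimal sequence uses $k$ distinct vertices, so the sequence-submodular optimum equals the DkS optimum on $G$.

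To finish, given any $\alpha$-approximation algorithm $\mathcal{A}$ for sequence submodular maximization, I run $\mathcal{A}$ on the constructed instance to obtain a sequence $\sigma$ with $f(\sigma) \geq \mathrm{OPT}/\alpha$, and output the vertex set $V(\sigma)$ (padded arbitrarily up to $k$ vertices, which never decreases the induced edge count). This yields a $k$-vertex subgraph of $G$ with at least $\mathrm{OPT}_{\text{DkS}}/\alpha$ edges, establishing the claimed reduction. There is no deep obstacle here; the only subtlety to guard against is that the sequence objective is ordering-sensitive and could in principle double-count edges when vertices repeat in $\sigma$. Using a coverage-based $h$ together with the symmetric orientation of each undirected edge eliminates both issues at once, making $f(\sigma)$ depend only on $V(\sigma)$ and collapse exactly to the DkS objective.
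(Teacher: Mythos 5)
Your proposal is correct and follows essentially the same reduction as the paper: bidirect each edge of the DkS instance, observe that a length-$k$ sequence of vertices then induces exactly (one arc per) the edges of the subgraph spanned by its vertex set, conclude the two optima coincide, and transfer the $\alpha$-approximation. The only difference is cosmetic: the paper takes the linear function $h(S)=\lvert S\rvert$ and argues that any permutation of a vertex set induces exactly one of the two opposite arcs per induced edge, whereas you use a coverage function over the undirected edges, which in addition makes $f(\sigma)$ depend only on the vertex set and explicitly rules out double counting when vertices repeat.
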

\begin{proof}
	To prove this lemma, we show that for each instance of DkS over a directed graph $G(V,E)$ we can build an instance of the sequence submodular maximization problem over a directed graph $H(V,E')$ such that solving the latter problem also solves the former one. 
		We assume all vertices and edges have a single state. Therefore, the problem translates to the non-adaptive sequence submodular scenario.
		
	Graph $H$ is built from graph $G$ by replacing each edge $e = (u,v)$ in $E$ by two directed edges $(u,v)$ and $(v,u).$ 
	We define $h(S) = |S|$, which is linear and therefore submodular. 
	Finally, the sequence submodular function $f$ is defined as $f(\sigma) = h(E(\sigma)) = |E(\sigma)|$.
	It remains to show that for every subset of vertices $S$ the value of function $f$ for an arbitrary permutation $\sigma_S$ of $S$ is equivalent to the size of subgraph $G_S$  induced  by those vertices in graph $G$.
	This is true because for every edge $(u,v) \in G_S$ we have two corresponding edges in the directed graph $H$ and based on the order of $u$ and $v$ exactly one of them is considered in $E(\sigma_S)$.
	
As a result, maximizing the function $f$ with a cardinality constraint $k$ is equivalent to solving the DkS problem. Thus, any algorithm with an $\alpha$ approximation factor to the sequence submodular maximization problem solves DkS with at least  an $\alpha$ approximation factor.
\end{proof}

\citet{manurangsi2017almost} showed that any algorithm with an $O(n^{1/(\log \log n)^c})$ approximation factor to the DkS problem (for a constant $c > 0$) would prove the exponential time hypothesis is false. Next, we directly state the result of \cite{manurangsi2017almost}.

\begin{theorem}[\citet{manurangsi2017almost}, Theorem~1] \label{theo:DkS}
	There is a constant $c > 0$ such that, assuming the exponential time hypothesis, no polynomial-time algorithm can, given
	a graph $G$ on $n$ vertices and a positive integer $k \leq n$, distinguish between the following two cases:
	\begin{itemize}
		\item There exist $k$ vertices of $G$ that induce a $k$-clique.
		\item Every $k$-subgraph of $G$ has density at most $n^{-1/(\log \log n)^c}$.
	\end{itemize}
\end{theorem}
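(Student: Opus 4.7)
The strategy is to prove the theorem via a hardness-preserving reduction from the densest $k$-subgraph problem (DkS), and then invoke the ETH-based inapproximability of DkS due to Manurangsi. The adaptive case will follow automatically by constructing a deterministic instance, so that adaptivity confers no advantage and the hardness for the non-adaptive case carries over immediately.

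The plan is to first establish the reduction (this is the content of the upcoming \cref{lemma:seq-DkS}). Given an undirected DkS instance on a graph $G = (V,E)$ with budget $k$, I would build a directed graph $H = (V, E')$ by replacing each undirected edge $\{u,v\}$ with the two directed arcs $(u,v)$ and $(v,u)$. Then I would define the set function $h(S) = |S|$ on $E'$, which is linear and hence (weakly) set submodular with $\gamma = 1$ and monotone, and set $f(\sigma) = h(E(\sigma)) = |E(\sigma)|$. The key observation to verify is that for any vertex subset $S \subseteq V$ of size $k$ and any ordering $\sigma$ of $S$, the value $f(\sigma)$ is exactly the number of edges of $G$ induced by $S$: for every induced edge $\{u,v\}$, precisely one of the two arcs $(u,v),(v,u)$ in $H$ has its endpoints in the correct order in $\sigma$ and hence lies in $E(\sigma)$, while the other does not. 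Consequently $\max_\sigma f(\sigma) = \mathrm{OPT}_{\mathrm{DkS}}(G,k)$, and any $\alpha$-approximation for sequence submodular maximization transfers verbatim to an $\alpha$-approximation for DkS.

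Next, I would invoke \cref{theo:DkS} from Manurangsi as a black box: assuming ETH, there is a constant $c > 0$ and no polynomial-time algorithm that can distinguish DkS instances containing a $k$-clique from those where every $k$-subgraph has density at most $n^{-1/(\log\log n)^c}$. This gap directly rules out a polynomial-time $n^{1/(\log\log n)^c}$-approximation for DkS under ETH, since such an algorithm on a YES instance would output a subgraph of density strictly larger than the NO-case threshold, distinguishing the two cases. Combining with the reduction, no polynomial-time $n^{1/(\log\log n)^c}$-approximation for sequence submodular maximization exists under ETH.

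Finally, for the adaptive extension, I would remark that the whole reduction is constructed with a trivial state space: every vertex (and therefore every induced edge) is assigned a single deterministic state, so the adaptive sequence submodular problem on the constructed instance collapses to its non-adaptive counterpart and any adaptive policy reduces to a deterministic sequence. Hence the lower bound propagates to the adaptive setting, yielding the theorem. The main delicate point in writing this out is the verification of the \emph{exactly one orientation} claim in the reduction and confirming that the cardinality parameters and the item count $n$ match up between the DkS and sequence submodular instances so that the $n^{1/(\log\log n)^c}$ expression is preserved without loss; everything else is an immediate composition of the two ingredients.
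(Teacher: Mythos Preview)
You have proved the wrong statement. The theorem labeled \texttt{theo:DkS} is a \emph{cited} result from \citet{manurangsi2017almost} about the hardness of the densest $k$-subgraph problem; the paper does not prove it and simply imports it as a black box. Your write-up instead establishes \cref{theory:hardness} (the hardness of \emph{sequence submodular} maximization), and in doing so explicitly \emph{invokes} \cref{theo:DkS} as an ingredient --- you even say ``I would invoke \cref{theo:DkS} from Manurangsi as a black box.'' A proof cannot assume the very statement it is meant to establish.

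For what it is worth, your argument is essentially identical to the paper's proof of \cref{theory:hardness}: the same reduction from DkS (replace each undirected edge by two opposite arcs, take $h(S)=|S|$, observe that exactly one orientation of each induced edge survives in $E(\sigma)$), the same appeal to Manurangsi's gap result, and the same remark that a single deterministic state collapses the adaptive problem to the non-adaptive one. So if the task had been to prove \cref{theory:hardness}, your proposal would match the paper. But the statement you were asked to address, \cref{theo:DkS}, requires the machinery of Manurangsi's paper (label cover, birthday repetition, and related PCP-style techniques) and is not something one re-derives here.
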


To sum-up, \cref{theory:hardness} is proved from the combination of the two following facts: 
\begin{enumerate}
	\item If there is an algorithm with an approximation within a $n^{1/(\log \log n)^c}$ factor to the sequence submodular maximization problem, from the result of \cref{lem:hyper_size}, we know that it would solve the DkS problem with at most the same factor. 
	\item If there is an algorithm with a $n^{1/(\log \log n)^c}$ approximation factor to the DkS problem, it could distinguish the two cases of \cref{theo:DkS} and would prove the exponential time hypothesis to be false.
\end{enumerate}

\section{Additional Experimental Details} \label{baselineDetails}

\subsection{Amazon Product Recommendation} \label{amazonAdditional}

In this application, we consider the task of recommending products to users. In particular, we use the Amazon Video Games review dataset \citep{amazonReviews}, which contains 10,672 products, 24,303 users, and 231,780 confirmed purchases. We furthered focused on the products that had been purchased at least 50 times each, leaving us with a total of 958 unique products.

Although we are using a different dataset, the experimental set-up closely follows that of the movie recommendation task in \citet{tschiatschek17} and \citet{mitrovic18a}. We first group and sort all the data so that each user $u$ has an associated sequence $\sigma_u$ of products that they have purchased. These user sequences are then randomly partitioned into a training set and a testing set using a 80/20 split. Note that we 5 trials to average our results.

\vspace{0.03in}
Using the training set, we build a graph $G = (V,E)$, where $V$ is the set of all products and $E$ is the set of edges between these products. Each product $i \in V$ has a self-loop $(i,i)$, where the weight (denoted $w_{ii}$) is the fraction of users in the training set that purchased product $v_i$. Similarly, for each edge $(i,j)$, the corresponding weight $w_{ij}$ is defined to be the conditional probability of purchasing product $j$ given that the user has previously purchased product $i$.

\vspace{0.03in}
For each sequence $\sigma_u$ in the test set, we are given the first $g$ products that user $u$ purchased, and then we want to predict the next $k$ products that she will purchase. 
After each product is recommended to the user, the state of the product is revealed to be 1 if the user has indeed purchased that product, and 0 otherwise. At the start, the $g$ given products are known to be in state 1, while the states of the remaining products are initially unknown. 

\vspace{0.03in}
As described in \cref{sec:ass}, the states of the edges are determined by the states of the nodes. In this case, the state of each edge $(i,j)$ is equal to the state of product $i$. The intuitive idea is that edge $(i,j)$ encodes the value of purchasing product $j$ after already having purchased product $i$. Therefore, if the user has definitely purchased product $i$ (i.e., product $i$ is in state 1), then they should receive the full value of $w_{ij}$. On the other hand, if she has definitely not purchased product $i$ (i.e., product $i$ is in state 0), then edge $(i,j)$ provides no value. Lastly, if the state of product $i$ is unknown, then the expected gain of edge $(i,j)$ is discounted by $w_{ii}$, the value of the self-loop on $i$, which can be viewed as a simple estimate for the probability of the user purchasing product $i$. See Figure \ref{amazonGraph} for a small example.

We use a probabilistic coverage utility function as our monotone adaptive submodular function $h$. Mathematically, 
\[
h(E_1) = \sum_{j \in V} \Big[ 1 - \prod_{(i,j) \in E_1} (1 - w_{ij}) \Big],
\]
where $E_1 \subseteq E$ is the subset of edges that are in state 1.

\subsection{Wikipedia Link Prediction} \label{wikiAdditional}

We use the Wikispeedia dataset~\citep{wikispeedia}, which consists of 51,138 completed search paths on a condensed version of Wikipedia that contains 4,604 pages and 119,882 links between them. We further condense the dataset to include only articles that have been visited at least 100 times, leaving us with 619 unique pages and 7,399 completed search paths.

One natural idea for scoring each algorithm would be to look at the length of the shortest path between the predicted target and the true target. However, the problem with this metric is that all the popular pages have relatively short paths to most potential targets (primarily since they have so many available links to begin with). Hence, under this scoring, just choosing a popular page like ``Earth" would be competitive with many more involved algorithms. 

Instead, we define a measure we call the \textit{Relevance Distance}. The relevance distance of a page $i$ to a target page $j$ is calculated by taking the average shortest path length to $j$ across all neighboring pages of $i$. A lower distance indicates a higher relevance. For example, if our target page is \textit{Computer Science}, both \textit{Earth $\rightarrow$ Earth Science $\rightarrow$ Computer Science} and \textit{University $\rightarrow$ Education $\rightarrow$ Computer Science} have a shortest path of length 2. However, the relevance distance of \textit{Earth} to \textit{Computer Science} is 2.68, while the relevance distance of \textit{University} to \textit{Computer Science} is 2.41, which fits better with the intuition that \textit{University} is logically closer to \textit{Computer Science}.

\subsection{Deep Learning Baseline Details} \label{deepDetails}

\subsubsection{Feed Forward Neural Network}

For both experiments, the input to the Feed Forward Neural Network is a size $|V|$ vector $X$. That is, there is one input for each item in the ground set. In the Amazon product recommendation task in \cref{sec:amazon}, $X_i = 1$ if the user is known to have purchased product $i$ and 0 otherwise. Similarly, for the Wikipedia link prediction task in \cref{sec:wikipedia}, $X_i = 1$ if the user is known to have visited page $i$ and 0 otherwise. 

The output in both cases is a size $|V|$ soft-maxed vector $Y$. In \cref{sec:amazon}, $Y_i$ can be viewed as the probability that product $i$ will be the user's next purchase. In \cref{sec:wikipedia}, $Y_i$ can be viewed as the probability that user will visit page $i$ next.

For the Amazon product recommendation task in \cref{sec:amazon}, each user $u$ in the training set has an associated sequence $\sigma_u$ of products she purchased. Each such sequence was split into $| \sigma_u | - 2$ training points by taking the first $g$ products as input and the $(g+1)$-th product as the output for $g = 1, \hdots, | \sigma_u | - 1$. For each user $u$ in the testing set, we would take the first $g=4$ products she purchased and encode them in the vector $X$ as described above. We would then input this vector into our trained network and output the vector $Y$. In the non-adaptive case we cannot get any feedback from the user, so we simply output the products corresponding to the $k$ highest values in $Y$. 

In the adaptive case, we would look at the largest value $Y_j$ in our output vector and output this as our first recommendation. We then check if the corresponding product appeared somewhere later in the user's sequence $\sigma_u$. If yes, then we would update our input $X$ so that $X_j = 1$ and re-run the network to get our next recommendation. If not, we would simply use the next highest value in $Y_j$ as our next recommendation (since the input doesn't change). This was repeated for $k$ recommendations. This is supposed to mimic interaction with the user where we would recommend a product, and then see whether or not the user actually purchases this product. Note that we only considered values $Y_j$ such that $X_j = 0$ because we did not want to recommend products that we knew the user had already purchased.

The main difference for the Wikipedia task in \cref{sec:wikipedia} is that, in the testing phase, we cannot simply output the top $k$ values in $Y$ as we did above because they likely will not constitute a valid path. Instead, we only have an adaptive version that is similar to what was described above. We find the highest value $Y_j$ such that $X_j = 0$ (i.e. the user had not already been to this page) and a link to page $j$ actually exists from our current page. We output this page $j$ as our recommendation for the user's next page. We then check if the user actually visited our predicted page $j$ at some point in their sequence of pages. If yes, we would update $X$ so that $X_j = 1$ and re-run the network. If not we would look to the next highest value in the output $Y$. This was repeated for $k$ guesses. Note that if we reached the true target page, we would stop making guesses.

In terms of architecture, we used a single hidden layer of 256 nodes with ReLU activations. We use a batch size of 1024 at first and then go down to a batch size of 32 when we are in the low data regime (i.e. only using 1\% of the available training data). We used an 80/20 training/validation split to guide our early stopping criterion during training (with minimum improvement of 0.01 and patience of 1). We used categorical cross-entropy as our loss function.

\subsubsection{LSTM}

The main difference between the LSTM and the feed forward network is in the input. The input to the LSTM is a sequence of one-hot encoded vectors instead of just a single vector. That is, for the LSTM, each vector in the sequence had exactly one index with value $1$.

We experimented with using a long sequence of input vectors and padding with all-zero vectors, but we found better results using a fixed small sequence length $g$ and then ``pushing" the sequence back when updating. For example, if our current input was a sequence of vectors $[ v_1, v_2, v_3 ]$ and we wanted to update it with a new vector $v_4$, the updated input would be $[ v_2, v_3, v_4 ]$.

The adaptive LSTM followed the same set-up as the non-adaptive LSTM, but with the same adaptive update rules described above for the feed-forward neural network.

For all experiments, we used a single hidden layer of 8 LSTM nodes. The other hyperparameters are all the same as described for the Feed Forward network above, except we start at a batch size of 256 instead of 1024 (before also going down to a batch size of 32 in the low data regime).

\end{document}